\newtheorem{proposition}{Proposition}[section]
\newtheorem{example}{Example}
\DeclareMathOperator*{\argmax}{arg\,max}
\DeclareMathOperator*{\argmin}{arg\,min}
\DeclareMathAlphabet{\altmathcal}{OMS}{cmsy}{m}{n}
\providecommand{\keywords}[1]
{
  \small	
  \textbf{\textit{Keywords---}} #1
}
\title{Solving the Best Subset Selection Problem via Suboptimal Algorithms}
\author{Vikram Singh$^{1}$, Min Sun$^{2}$  \\
        \small $^{1}$University of Central Oklahoma, Edmond, OK \\
                 \small  vsingh2@uco.edu \\
        \small $^{2}$The University of Alabama, Tuscaloosa, AL \\
                 \small msun@ua.edu\\
}
\date{March 2025} 
\begin{document}

\maketitle

\begin{abstract}
Best subset selection in linear regression is well known to be nonconvex and computationally challenging to solve, as the number of possible subsets grows rapidly with increasing dimensionality of the problem. As a result, finding the global optimal solution via an exact optimization method for a problem with dimensions of 1000s may take an impractical amount of CPU time. This suggests the importance of finding suboptimal procedures that can provide good approximate solutions using much less computational effort than exact methods. In this work, we introduce a new procedure and compare it with other popular suboptimal algorithms to solve the best subset selection problem. Extensive computational experiments using synthetic and real data have been performed. The results provide insights into the performance of these methods in different data settings. The new procedure is observed to be a competitive suboptimal algorithm for solving the best subset selection problem for high-dimensional data.

\keywords{Linear regression, Best subset selection, Suboptimal algorithms, High-dimensional data}
\end{abstract}
\section{Introduction} 
\label{intro}
Consider a linear regression model \( y=X\beta+\varepsilon \),
where \(y \in \mathbb{R}^{n}\) is a response vector, \(X \in \mathbb{R}^{n \times p}\) is a design matrix, \(\beta \in \mathbb{R}^{p}\) is a coefficient vector, and \(\varepsilon \in \mathbb{R}^{n}\) is a noise vector. The columns of $X$ have been standardized to have zero mean and unit $l_{2}$-norm. 
The best subset selection (BSS) problem can be formulated as
\begin{equation}\label{bsschp5}
    \min_{\beta \in \mathbb{R}^p} \; \; \|y-X\beta\|_{2}^{2} \quad \text{subject to} \quad \|\beta\|_{0} \leq k,  
    \tag{BSS}
\end{equation}
where $\|\cdot\|_{0}$ is a pseudo-norm defined as $\|\beta\|_{0}=\sum_{i=1}^{p} 1(\beta_{i} \neq 0 )$, and $1(\cdot)$ is the indicator function
\begin{equation*}
    1(\beta_{i})=
    \begin{cases}
    1 & \text{if}\;\beta_{i}\neq 0\\
    0 & \text{if}\;\beta_{i} =0
    \end{cases}.
\end{equation*}
The $\|\cdot\|_{0}$ constraint is also called a cardinality-constraint (CC) in \eqref{bsschp5}, and it makes \eqref{bsschp5} NP-hard (see \cite{natarajan1995sparse}) and computationally challenging to solve in large dimensions. The objective function in \eqref{bsschp5} is the standard residual sum of squares (RSS). As the value of $p$ increases in the \eqref{bsschp5}, the number of feasible subsets grows exponentially, making it difficult for an exact method to find a globally optimal solution to the \eqref{bsschp5} problem. Thus, designing good suboptimal algorithms with accelerated convergence speeds is a common practical strategy.

Due to the high computational cost of solving feature selection problems with a large number of features to choose from, some suboptimal algorithms have been proposed in the pattern recognition literature (see \cite{mucciardi1971comparison}, \cite{chang:1973}, \cite{marill1963effectiveness}, \cite{whitney1971fss}, \cite{pudil1994floating}). 
The forward selection method proposed by Whitney \cite{whitney1971fss} is another suboptimal algorithm used to select a subset of features by sequentially selecting one feature at a time that minimizes a given criterion function until the desired number of features has been selected. Another popular algorithm is backward selection, introduced by \cite{marill1963effectiveness}, which starts by including all features in the model and sequentially removing one feature at a time until the desired number of features in the model is reached. Also, \cite{bertsimasEtal:2015} used a discrete extension of the first-order methods in convex optimization to find an approximate solution to the \eqref{bsschp5} problem. We can use these algorithms to address problems of much higher dimensions to find a good approximate solution to the \eqref{bsschp5} problem. There are many suboptimal algorithms in the literature that we can use for solving the \eqref{bsschp5} problem. In this work, we propose a new suboptimal algorithm and compare it with four other popular suboptimal algorithms to solve the \eqref{bsschp5} problem. Next, we introduce some notation.  

\paragraph{Notation} All vectors will be column vectors unless specified otherwise. For a vector $x$, the notation $x_{i}$ signifies the $i$th component, and $x_{1:i}$ represents a vector with first $i$ components of $x$. The notation $X \in \mathbb{R}^{n \times p}$ represents a real $n \times p$ matrix. 
For an index set $\altmathcal{I}$, $\altmathcal{I}_{1:i}$ represents first $i$ indices of $\altmathcal{I}$. $X_{\altmathcal{I}}$ represents a submatrix consisting of columns of $X$ indexed by the set $\altmathcal{I}$. The cardinality of the set $\altmathcal{I}$ is given by the notation $|\altmathcal{I}|$. 
 Let $\altmathcal{A}=\{1,...,p \}$ be the set of indices of all the available predictors and $\altmathcal{I}$ be the set of indices of the $k$ predictors already selected in the model. The set $\altmathcal{A} \backslash \altmathcal{I}$ represents the complement of the set $\altmathcal{I}$ w.r.t. $\altmathcal{A}$. We define
\[ q(X_{\altmathcal{I}}) = \min_{ \beta \in \mathbb{R}^{|\altmathcal{I}|}} \; \|y-X_{\altmathcal{I}}\beta \|_{2}^{2}, \]
by including the predictors indexed by set $\altmathcal{I}$ in the linear model.
 For a set $\altmathcal{J} \subseteq \altmathcal{I}$, we define the gain in the RSS after dropping the predictor $X_{\altmathcal{J}}$ from the model w.r.t. set $\altmathcal{I}$ as
\( G(\altmathcal{J})=q(X_{\altmathcal{I}\backslash \altmathcal{J}}) -q(X_{\altmathcal{I}}). \)
For a set $\altmathcal{J} \subseteq \altmathcal{A}\backslash \altmathcal{I}$, we define the reduction in the RSS after adding the predictor $X_{\altmathcal{J}}$ to the model w.r.t. set $\altmathcal{I}$ as
\( R(\altmathcal{J})=q(X_{\altmathcal{I}})-q(X_{\altmathcal{I}\, \cup \, \altmathcal{J}}). \) 
\section{Four existing suboptimal algorithms}
In this section, we briefly describe four suboptimal algorithms to solve \eqref{bsschp5} problem. 
\subsection{Forward selection}  
The forward selection introduced by \cite{whitney1971fss} is a ``bottom-up" procedure that starts from the empty model and adds one feature at a time until the model contains the desired number of features. \algorithmcfname{ \ref{alg:fs_bss}} describes this procedure to solve the \eqref{bsschp5} problem. Starting from an empty model, we add one predictor at each iteration, which gives the maximum reduction in the RSS by including it in the current model. We stop the procedure when $k$ predictors are included in the model.
\begin{algorithm}
    \KwIn{$p, k, X, y$}
    \KwOut{$\altmathcal{I}^{\ast}$}
    Initialize $\altmathcal{I}=\emptyset$ and $stop=0$. \\
    \While{stop=0}{
     Find an index $i$ which maximizes the reduction w.r.t. the set $\altmathcal{I}$, i.e. 
     \[  i = \argmax_{i \in \altmathcal{A} \backslash \altmathcal{I}} \; R(i). \]   \\
     Update $\altmathcal{I}=\altmathcal{I}\cup i $.\\
     \If{$|\altmathcal{I}|=k$}{
      Set $\altmathcal{I}^{\ast}=\altmathcal{I}$, $stop=1$, and \Return. \\
     }
    }
    \caption{Forward Selection for \eqref{bsschp5}}
    \label{alg:fs_bss}
\end{algorithm}
\subsection{Sequential forward floating selection} 
Forward selection suffers from the so-called ``nesting effect", meaning that once a predictor is included in the model, there is no way to exclude it later. 
In \cite{pudil1994floating}, the authors introduced a sequential forward floating selection (SFFS) procedure to improve the forward selection method. In contrast to forward selection, SFFS incorporates a check to remove a previously selected predictor from the current model, provided that we can find a better model. For a given iteration of \algorithmcfname{ \ref{alg:sffs_bss}}, the index set $\altmathcal{I}$ has $i$ predictors. Step \ref{alg:sffs_inclusion} selects a new predictor $t$ as in the forward selection to obtain a new model with $i$+1 predictors. Step \ref{alg:sffs_CondExclusion} selects the worst predictor $r$ in the current model. If $r=t$, the new model is acceptable, and we can select the next predictor. If $r$ differs from $t$, we will drop the predictor $r$ to obtain a model with $i$ predictors again. We further try eliminating the previously selected features in the search for a better model.
\begin{algorithm}
    \KwIn{$p$, $k$, $X$, $y$}
    \KwOut{$\altmathcal{I}^{\ast}$}
    Set $\altmathcal{A}=\{1,...,p\}$ and find an initial index set $\altmathcal{I}$ with $|\altmathcal{I}|=2$ using \algorithmcfname{ \ref{alg:fs_bss}}. \\
    Initialize $i=2$.\\
    \While{$i<k$}{
    \textit{(Inclusion)} Find an index $t \in \altmathcal{A}\backslash\altmathcal{I}$ which maximizes the reduction w.r.t the set $\altmathcal{I}$, i.e.
    \[ t=\argmax_{j\in \altmathcal{A} \backslash \altmathcal{I}}\;R(j),\] and set $\altmathcal{I}=\altmathcal{I}\cup t$. \label{alg:sffs_inclusion}\\
    \textit{(Conditional Exclusion)} Find an index $r \in \altmathcal{I}$ which minimizes the gain w.r.t. the set $\altmathcal{I}$, i.e.
    \[ r=\argmin_{j\in \altmathcal{I}}\; G(j).\] \label{alg:sffs_CondExclusion} \\
    \eIf{$r=t$}{ Set $i=i+1$, go to step \ref{alg:sffs_inclusion}. \\
    }{ 
      Set $\altmathcal{\hat{I}}=\altmathcal{I}\backslash r$. \\
      \If{$i=2$}{
       Set $\altmathcal{I}=\altmathcal{\hat{I}}$ and go to step \ref{alg:sffs_inclusion}. \\
      }
    }
    \textit{(Continuation of Conditional Exclusion)}
    Find an index $s \in \altmathcal{\hat{I}}$ 
    \[ s=\argmin_{j\in\altmathcal{\hat{I}}}\;G(j).\] \label{alg:sffs_contCondExclusion} \\
    \eIf{$q(X_{\hat{\altmathcal{I}}\backslash s})\geq q(X_{\altmathcal{I}_{1\,:\,(i-1)})}$}{
     Set $\altmathcal{I}=\altmathcal{\hat{I}}$ and go to step \ref{alg:sffs_inclusion}. \\
    }{
     Set $\altmathcal{\hat{I}}=\altmathcal{\hat{I}}\backslash s$ and $i=i-1$.\\
     \eIf{$i=2$}{
      Set $\altmathcal{I}=\altmathcal{\hat{I}}$ and go to step \ref{alg:sffs_inclusion}.\\
     }{
      go to step \ref{alg:sffs_contCondExclusion}.\\
     }
    }
    }
    Set $\altmathcal{I}^{\ast}=\altmathcal{I}$ and \Return. \\
    \caption{Sequential Forward Floating Selection for \eqref{bsschp5}}
    \label{alg:sffs_bss}
\end{algorithm}
\subsection{Discrete first order method} 
\label{sec:dfo}
The discrete first-order (DFO) method, as presented in \cite{bertsimasEtal:2015}, is an extension of the first-order methods in convex optimization to solve the \eqref{bsschp5} problem. Consider the minimization problem
\begin{equation}
    \min_{\beta}\; g(\beta) \quad \text{subject to} \quad \|\beta\|_{0} \leq k,
\end{equation}
where $g(\beta)\geq 0$ is convex and has Lipschitz continuous gradient:
\begin{equation}\label{eqn:dfo_lipschitz}
    \| \nabla g(\beta) - \nabla g(\Tilde{\beta})\|\leq l \|\beta - \Tilde{\beta}\|.
\end{equation}
The following proposition gives a closed-form solution for the case when $g(\beta)=\|\beta - c \|_{2}^{2}$ for a given $c$. 
\begin{proposition}[Proposition 3, \cite{bertsimasEtal:2015}]\label{prop:dfo_prop1}
If $\Hat{\beta}$ is an optimal solution to the following problem,
\begin{equation}\label{eqn:dfo_prop1}
 \Hat{\beta} \in \argmin_{\|\beta\|_{0}\leq k} \; \|\beta - c \|_{2}^{2},  
\end{equation}
then it can be computed as follows: $\Hat{\beta}$ retains the $k$ largest (in absolute value) elements of $c \in \mathbb{R}^{p}$ and sets the rest to zero, i.e. if $|c_{(1)}| \geq |c_{(2)}| \geq . . . \geq |c_{(p)}|$, denote the ordered values of the absolute value of the vector $c$, then
\[ \Hat{\beta}_{i}=\begin{cases}
    c_{i}, & if \; i \; \in \{(1),(2),...,(k) \}, \\
     0  ,  & otherwise, \\    
\end{cases} \]
where $\Hat{\beta}_{i}$ is the $ith$ coordinate of $\Hat{\beta}$. Let $H_{k}(c)$ denote the set of solutions for \eqref{eqn:dfo_prop1}.  
\end{proposition}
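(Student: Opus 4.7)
The plan is to exploit the fact that the objective $\|\beta-c\|_2^2=\sum_{i=1}^p(\beta_i-c_i)^2$ is \emph{separable} across coordinates, while the $\ell_0$ constraint only restricts the support. First, I would condition on the support $S=\{i:\beta_i\neq 0\}\subseteq\{1,\dots,p\}$ with $|S|\le k$ and split the sum as
\[
\|\beta-c\|_2^2 \;=\; \sum_{i\in S}(\beta_i-c_i)^2 \;+\; \sum_{i\notin S}c_i^2.
\]
Since the coordinates outside $S$ are forced to zero, their contribution $\sum_{i\notin S}c_i^2$ depends only on $S$, not on the values of $\beta$. The first sum is an unconstrained quadratic in the free variables $\{\beta_i\}_{i\in S}$ and is minimized (to $0$) by the choice $\beta_i=c_i$ for all $i\in S$.

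Next, I would substitute this optimal in-support assignment back in and reduce the original problem to the combinatorial one
\[
\min_{S\subseteq\{1,\dots,p\},\,|S|\le k}\;\sum_{i\notin S}c_i^2 \;=\; \|c\|_2^2 \;-\; \max_{|S|\le k}\sum_{i\in S}c_i^2.
\]
Because all summands $c_i^2$ are nonnegative, the outer maximum is attained by taking $|S|=k$ (adding more indices never hurts) and by selecting the $k$ indices with the largest values of $c_i^2$, equivalently the largest values of $|c_i|$. In the notation of the statement, $S=\{(1),(2),\dots,(k)\}$ is optimal, giving the stated formula $\hat\beta_i=c_i$ for $i\in S$ and $\hat\beta_i=0$ otherwise.

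The argument is essentially a two-level minimization (inner over coefficient values, outer over supports), and no step requires heavy machinery; the only subtle point worth flagging is \emph{non-uniqueness of} $H_k(c)$ when ties occur among the ordered absolute values at the boundary (when $|c_{(k)}|=|c_{(k+1)}|$), which is precisely why the proposition phrases the conclusion as ``$\hat\beta$ is \emph{a}'' solution and defines $H_k(c)$ as the \emph{set} of minimizers. I would therefore close the proof with a brief remark that any choice of $k$ indices achieving the top-$k$ values of $|c_i|$ yields an element of $H_k(c)$, since they all produce the same optimal objective value $\|c\|_2^2-\sum_{j=1}^k c_{(j)}^2$.
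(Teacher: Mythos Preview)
Your argument is correct: the two-level minimization (first over coefficient values given a fixed support, then over supports) cleanly reduces the problem to picking the $k$ indices with largest $c_i^2$, and your remark on ties correctly explains why $H_k(c)$ is defined as a set. Note, however, that the paper does not supply its own proof of this proposition; it is quoted verbatim from \cite{bertsimasEtal:2015} (Proposition~3 there) and stated without proof, so there is no in-paper argument to compare against---your write-up is the standard justification and would serve well as the omitted proof.
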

The next proposition provides an upper bound for the function $g(\eta)$ around $g(\beta)$.
\begin{proposition}[Proposition 4, \cite{bertsimasEtal:2015}]\label{prop:dfo_prop2}
 For a convex function $g(\beta)$ satisfying condition \eqref{eqn:dfo_lipschitz} and for any $L\geq l$, we have 
 \begin{equation}
     g(\eta)\leq Q_{L}(\eta,\beta):= g(\beta) + \frac{L}{2}\|\eta - \beta \|_{2}^{2}+ (\eta - \beta)^{t}\nabla g(\beta)
 \end{equation}
 for all $\beta$, $\eta$ with equality holding at $\beta=\eta$.
\end{proposition}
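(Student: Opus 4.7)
The plan is to apply the classical descent lemma argument, which relies only on the Lipschitz gradient condition \eqref{eqn:dfo_lipschitz} and not on convexity of $g$. The starting point is the fundamental theorem of calculus along the segment joining $\beta$ to $\eta$: writing $\phi(t) = g(\beta + t(\eta-\beta))$ for $t \in [0,1]$, we have $\phi'(t) = (\eta-\beta)^{T}\nabla g(\beta + t(\eta-\beta))$, so
\[
g(\eta) - g(\beta) \;=\; \int_{0}^{1} (\eta-\beta)^{T}\nabla g(\beta + t(\eta-\beta))\, dt.
\]

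Next I would subtract the linear term $(\eta-\beta)^{T}\nabla g(\beta)$ from both sides and absorb it into the integrand, yielding
\[
g(\eta) - g(\beta) - (\eta-\beta)^{T}\nabla g(\beta) \;=\; \int_{0}^{1} (\eta-\beta)^{T}\bigl(\nabla g(\beta + t(\eta-\beta)) - \nabla g(\beta)\bigr)\, dt.
\]
To bound the right-hand side, I would apply the Cauchy--Schwarz inequality under the integral, then invoke the Lipschitz condition \eqref{eqn:dfo_lipschitz} with constant $l$ on the pair $(\beta + t(\eta-\beta), \beta)$, which gives $\|\nabla g(\beta + t(\eta-\beta)) - \nabla g(\beta)\| \leq l\, t\, \|\eta-\beta\|$. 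Combining these bounds produces
\[
g(\eta) - g(\beta) - (\eta-\beta)^{T}\nabla g(\beta) \;\leq\; \int_{0}^{1} l\, t\, \|\eta-\beta\|_{2}^{2}\, dt \;=\; \tfrac{l}{2}\|\eta-\beta\|_{2}^{2}.
\]

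Finally, since $L \geq l$, the bound only loosens when $l$ is replaced by $L$, yielding $g(\eta) \leq g(\beta) + (\eta-\beta)^{T}\nabla g(\beta) + \tfrac{L}{2}\|\eta-\beta\|_{2}^{2} = Q_{L}(\eta,\beta)$. The equality case $\beta=\eta$ is immediate from inspection of $Q_{L}$, since all three terms on the right collapse to $g(\beta)$.

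Honestly, there is no serious obstacle here: the only subtlety worth highlighting is that convexity of $g$ is not used at all in the derivation; the inequality is purely a consequence of the Lipschitz gradient assumption. The one bookkeeping point to be careful with is the factor of $t$ arising from the Lipschitz bound at the intermediate point $\beta + t(\eta-\beta)$, which is what produces the $1/2$ in the final constant after integration.
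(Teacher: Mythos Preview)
Your argument is correct and is precisely the classical descent lemma derivation: fundamental theorem of calculus along the segment, Cauchy--Schwarz, Lipschitz bound on the gradient increment, and integration of $lt$ over $[0,1]$ to produce the $\tfrac{l}{2}$ constant. Your observation that convexity plays no role in this direction of the inequality is also accurate.

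As for comparison with the paper: the paper does not supply its own proof of this proposition. It is quoted verbatim as Proposition~4 of \cite{bertsimasEtal:2015} and used as a black box in deriving the update rule for Algorithm~\ref{alg:dfo}. So there is nothing in the present paper to compare against; your self-contained proof simply fills in what the authors delegated to the reference, and it does so by the standard route one would find in any treatment of first-order methods.
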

Applying Proposition \ref{prop:dfo_prop1} to $Q_{L}(\eta, \beta)$ in Proposition \ref{prop:dfo_prop2} the authors deduced the following result
\[ \argmin_{\|\eta\|_{0}\leq k} \; Q_{L}(\eta,\beta) = H_{k}\left( \beta - \frac{1}{L} \nabla g(\beta) \right)
\]
where $H_{k}(\cdot)$ is as defined after proposition \ref{prop:dfo_prop1}. Algorithm \ref{alg:dfo} can be used to solve the \eqref{bsschp5} with $g(\beta)=\|y-X\beta\|_{2}^{2}$, $\nabla g(\beta)=-2X^{t}(y-X\beta)$, and $l=\lambda_{\max}(X^{t}X)$. For further details and the convergence analysis for \algorithmcfname{ \ref{alg:dfo}} see section 3 in \cite{bertsimasEtal:2015}.
\begin{algorithm}
    \KwIn{$p$, $k$, $g$, $L$}
    \KwOut{$\beta^{*},g^{*}$}
    Initialize $i=0$ and $stop=0$.\\
    Find an initial $\beta_{i}$ such that $\|\beta_{i}\|_{0}\leq k$.\\
    \While{stop=0}{
    Set $i=i+1$. \\
    Find $\beta_{i} \in H_{k}\left( \beta_{i-1} - \frac{1}{L}\nabla g(\beta_{i-1})  \right)$. \\
    \If{$g(\beta_{i})-g(\beta_{i-1})<\varepsilon$}{
     Set $\beta^{*}=\beta_{i}$, $g^{*}=g(\beta_{i})$, $stop=1$, and \Return.
    }
    }
    \caption{Discrete First Order method for \eqref{bsschp5}}
    \label{alg:dfo}
\end{algorithm}
\subsection{Genetic algorithm} 
\label{sec:ga}
The genetic algorithm (GA) follows the Darwinian theory of evolution and is a popular method for finding approximate solutions to various optimization problems. The GA can be used to solve both continuous and discrete optimization problems (see \cite{goldberg1989genetic}). There are three major components in any GA.
\begin{enumerate}
    \item A population of chromosomes is maintained throughout the procedure.
    In particular, an initial pool of chromosomes is needed to start the procedure. It is usually selected randomly. It may also be obtained by other reasonable methods.
    \item Generate new chromosomes via crossover and/or mutation applied to the selected parents from the current population.
    \item Discard the worst chromosomes from the population using the fitness criteria.
\end{enumerate}
Algorithm \ref{alg:genetic_bss} describes our version of GA in binary variables, where $s$ is the population size. $CR$ and $MR$ are the crossover and mutation rates, respectively. The domain of \eqref{bsschp5} is the set $\mathbb{R}^{p}$. A binary $p$-vector $y$ with exactly $p-k$ zeros is regarded as a candidate solution (a chromosome) for the \eqref{bsschp5} problem because there is a one-to-one correspondence between $y$ and the feasible point $z$ for the \eqref{bsschp5}. In particular, for a given $y$, we can find $z$ such that $z_{\altmathcal{A}\backslash\altmathcal{I}}=0$ and
$f(z)=\min \; q(X_{\altmathcal{I}})$ where $\altmathcal{I}=\{i : y_{i}=1, i=1,...,p \}$.
\paragraph{Initial population} Create an initial population $P$ (step \ref{alg:genetic_initial_pop}), which is a matrix of the size $p\times s$ with every column having exactly $p-k$ zeros randomly placed. Every column of $P$ represents a chromosome.
\paragraph{Reproduction} Construct two new matrices $U$ and $V$ by performing \textbf{crossover} of $CR$ percentage of chromosomes from the current population as follows: Select two distinct columns $u$ and $v$ at random from $P$, and find a random crossover site given by the index $i$. Set two new arrays $\hat{u}$ and $\hat{v}$ such that
\[ \hat{u}=\begin{bmatrix}
    u_{1:i} \\
    v_{i+1:p}
\end{bmatrix} \text{ and } \hat{v}=\begin{bmatrix}
    v_{1:i}\\
    u_{i+1:p}
\end{bmatrix}.  \] 
Adjust $\hat{u}$ and $\hat{v}$ to have exactly $p-k$ zeros. To introduce diversity in the population, we perform \textbf{mutation} on $U$ and $V$ as follows: Randomly select $MR$ percentage of indices from $U$ and $V$, and assign the opposite values. Adjust indices to have exactly $k$ number of ones in each column. Add $U$ and $V$ to the current population $P$ and expand the array $F$ by evaluating the fitness criteria for the new offsprings.
\paragraph{Survival of the fittest} Discard the columns of $P$ that correspond to bigger cost function values to keep the size of the population to $s$. Adjust $F$ by deleting values for the discarded columns of $P$. 

The scalar $rand$ in step \ref{alg:genetic_rand} of \algorithmcfname{ \ref{alg:genetic_bss}} is sampled from the uniform distribution $U(0,1)$ at each call. The algorithm will stop once all the chromosomes in the current population become the same. As the final step, we find the corresponding feasible point $\beta^{*}$ to the \eqref{bsschp5}, which will be the final solution for GA.
\begin{algorithm}
    \KwIn{$s, p, q, k, CR, MR$}
    \KwOut{$\beta^{*}$}
    Initialize $stop=0$.\\
    \While{stop=0}{
      Construct an initial population $P$ of $s$ chromosomes.  Construct a corresponding array $F$ of criteria function 
      values for each chromosome. \label{alg:genetic_initial_pop}\\
      Construct new offspring $U$ and $V$ as follows. Initialize $r=0$. \label{alg:genetic_new_offsprings_step} \\
      \For{$t=1$ \KwTo $s$}{
        \If{$rand<CR$}{ \label{alg:genetic_rand}
        Set $r=r+1$.\\
        Generate two new chromosomes $\hat{u}$ and $\hat{v}$. \\ 
        Set $U_{\cdot r}=\hat{u}$ and $V_{\cdot r}=\hat{v}$.\\
        }
      }
      Mutate $MR$ percentage of bits in $U$ and $V$. \\
      Add $U$ and $V$ to the current population $P$ and update $F$ accordingly. \\
      Discard the chromosomes with bigger criteria function values to keep the population size at $s$. Adjust $P$ and $F$ accordingly. \\
      \If{$P_{i}=P_{j}$ $\forall$ $i,j$}{
      Set $stop=1$, and \Return \\
      }
    }
    \caption{Genetic Algorithm for \eqref{bsschp5}}
    \label{alg:genetic_bss}
\end{algorithm}
\section{New suboptimal algorithm}
\label{sec:chp5_sfs}
We introduce the sequential feature swapping (SFS) given by \algorithmcfname{ \ref{alg:sfs_bss}}, which is an iterative procedure based on the idea of swapping bad predictors from the currently selected model with good predictors not in the model at each iteration, starting from a model with $k$ predictors.  
This algorithm is attractive because it usually provides a good approximate solution to the \eqref{bsschp5} problem with much less computational effort.  
 \begin{algorithm}
 \KwIn{$p$, $k$, $t$, $X$, $y$}
 \KwOut{$\altmathcal{I}^{\ast}$}
      Set $\altmathcal{A}=\{1,...,p\}$ and find an initial index set $\altmathcal{I}$ of $k$ predictors.\\ 
     Set $k=0$ and $stop=0$.\\
     \While{stop=0}{
       \textit{(Drop)} Find an index set $\altmathcal{J} \subseteq \altmathcal{I}$ with $|\altmathcal{J}|=t$ which minimizes the gain w.r.t the set $\altmathcal{I}$, i.e. 
       \[  \altmathcal{J}=\argmin_{\altmathcal{S} \subseteq \altmathcal{I}} \; \; G(\altmathcal{S}),   
       \] and set $\hat{\altmathcal{I}}=\altmathcal{I} \backslash \altmathcal{J}$.      \\
       \textit{(Pick)} Find the index set $\altmathcal{Q} \subseteq \altmathcal{A}\backslash \altmathcal{I}$ with $|\altmathcal{Q}|=t$ which maximizes the reduction w.r.t. the set $\hat{\altmathcal{I}}$, i.e. 
       \[  \altmathcal{Q}= \argmax_{\altmathcal{S} \subseteq \altmathcal{A} \backslash \altmathcal{I}} \; \; R(\altmathcal{S}). \]   \\
       \eIf{$ q(X_{ \hat{\altmathcal{I}}\cup \altmathcal{Q}} ) < q(X_{\altmathcal{I}}) $}{
       \textit{(Switch)}
        Update $\altmathcal{I}= \hat{\altmathcal{I}} \cup \altmathcal{Q}$. \\
       }{Set $\altmathcal{I}^{\ast}=\altmathcal{I}$, $stop=1$, and \Return.
       }
     }
     \caption{Sequential Feature Swapping for \eqref{bsschp5} }
     \label{alg:sfs_bss}
\end{algorithm}
\begin{proposition}
Algorithm \ref{alg:sfs_bss} terminates after a finite number of iterations.
\end{proposition}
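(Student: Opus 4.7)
The plan is to argue by a strict monotonicity / finite state space argument on the sequence of index sets $\altmathcal{I}$ produced by the algorithm.

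First, I would observe that the algorithm's termination test is precisely the failure of the inequality $q(X_{\hat{\altmathcal{I}}\cup\altmathcal{Q}}) < q(X_{\altmathcal{I}})$. Hence, whenever the algorithm does not terminate on a given iteration, the updated index set $\altmathcal{I}_{\text{new}} = \hat{\altmathcal{I}}\cup\altmathcal{Q}$ satisfies
\[
q(X_{\altmathcal{I}_{\text{new}}}) < q(X_{\altmathcal{I}_{\text{old}}}).
\]
Thus the sequence of RSS values $\bigl(q(X_{\altmathcal{I}^{(m)}})\bigr)_{m \ge 0}$ is strictly decreasing along the non-terminating iterations.

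Second, I would note that every iterate $\altmathcal{I}^{(m)}$ is a subset of $\altmathcal{A}=\{1,\dots,p\}$ of cardinality exactly $k$ (the Drop step removes $t$ indices and the Pick step adds $t$ indices, preserving $|\altmathcal{I}|=k$). There are only $\binom{p}{k}$ such subsets. Combined with strict monotonicity of $q(X_{\altmathcal{I}^{(m)}})$, no subset can be visited twice, so the number of iterations is bounded above by $\binom{p}{k}$, hence finite.

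There is no genuine obstacle here beyond making sure the two observations are stated cleanly: the Drop/Pick/Switch update preserves $|\altmathcal{I}|=k$, and the termination condition is exactly the negation of strict decrease. Once these are in place, the finite-state pigeonhole argument closes the proof. If one wants to be slightly more explicit, one can phrase the pigeonhole step as: the map $m \mapsto \altmathcal{I}^{(m)}$ is injective on the non-terminating prefix of iterations (since the RSS values along it are pairwise distinct), and its codomain is a finite set of size $\binom{p}{k}$, so the prefix has length at most $\binom{p}{k}$.
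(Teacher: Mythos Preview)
Your proof is correct and shares the paper's core observation that the RSS values $q(X_{\altmathcal{I}^{(m)}})$ strictly decrease along non-terminating iterations. The paper's own proof, however, concludes termination from the fact that this decreasing sequence is bounded below by the optimal value $q(X_{\altmathcal{I}^{*}})$; taken literally that argument is incomplete, since a strictly decreasing real sequence that is bounded below need only converge, not terminate in finitely many steps. Your finite-state-space/pigeonhole argument (no index set of size $k$ can repeat, and there are only $\binom{p}{k}$ of them) closes this gap cleanly and, as a bonus, yields the explicit iteration bound $\binom{p}{k}$. So your route is the same monotonicity idea, but with a more rigorous finishing step than the paper's.
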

\begin{proof}
At each iteration, for the updated set $\altmathcal{I}$, the value of $q(X_{\altmathcal{I}})$ decreases monotonically, and $q(X_{\altmathcal{I}})$ is bounded from below by $q(X_{\altmathcal{I}^{*}})$, where $\altmathcal{I}^{*}$ is the optimal set of indices of $k$ predictors for the \eqref{bsschp5} problem. Hence, the algorithm \ref{alg:sfs_bss} terminates after a finite number of iterations.
\end{proof} 
In contrast to a forward (backward) selection algorithm where once a predictor is selected (removed from the model), it cannot be removed (added back again) from the model, Algorithm \ref{alg:sfs_bss} can remove some previously selected predictor or add a previously removed predictor to obtain a new model with a smaller RSS. Thus, SFS offers more power of search while still retaining a sense of monotonicity. When parameter t=1, we change the selected model by one predictor at a time. When t=2, we add and drop two predictors at a time. Although choosing t=2 always improves the SFS locally, there is no guarantee that the final model selected by choosing t=2 is always better than the model selected by choosing t=1.

We note that Algorithm \ref{alg:sfs_bss} shares a similar idea in Efroymson's stepwise algorithm (see \cite{miller2002subset}) with the difference that Algorithm \ref{alg:sfs_bss} starts when we already have $k$ predictors selected in the model, whereas the Efroymson stepwise algorithm starts with no predictor in the model and sequentially selects a new predictor with a check included to drop some previously selected predictor from the model at each step. We choose the initial set $\altmathcal{I}$ containing indices of the $k$ predictors with the largest magnitude in the OLS solution. Additionally, see the ``splicing algorithm" described in \cite{zhu2020polynomial}, which further generalizes the idea of Algorithm \ref{alg:sfs_bss} by letting the parameter t float. 
\section{Numerical results}
\label{sec:results}
We provide numerical results comparing the following algorithms to solve the \eqref{bsschp5} problem.

\begin{labeling}{DFOn }  
    \item[\textbf{SFS1}] Sequential Feature Swapping given by \algorithmcfname{ \ref{alg:sfs_bss}} with t=1;
    \item[\textbf{SFS2}] Sequential Feature Swapping given by \algorithmcfname{ \ref{alg:sfs_bss}} with t=2;
    \item[\textbf{FS}] Forward Selection, given by \algorithmcfname{ \ref{alg:fs_bss}};
    \item[\textbf{SFFS}] Sequential Forward Floating Selection given by \algorithmcfname{ \ref{alg:sffs_bss}};
    \item[\textbf{GA}]   Genetic Algorithm given by \algorithmcfname{ \ref{alg:genetic_bss}};
    \item[\textbf{DFO}]  Discrete First Order method given by Algorithm \ref{alg:dfo};
    \item[\textbf{DFOn}] Discrete First Order method given by Algorithm \ref{alg:dfo} with $n$ runs using different starting points and taking the best result as the final output.
\end{labeling}
We tested the performance of the algorithms on both real and synthetic data sets. We consider both the overdetermined (OD, $p<n$) and the underdetermined (UD, $p\gg n$) case.
\paragraph{Computer specifications} All the algorithms have been implemented in \textsc{MATLAB} 
 and testing has been done on The University of Alabama High-Performance Computer using \textsc{MATLAB} R2021a. To have a fair comparison, each algorithm is restricted to run on exactly one core on the Linux 64-bit operating system with Intel(R) Xeon(R) Gold 6126 CPU @ 2.60GHz. 
\subsection{Performance profiles and box plots}
Our numerical experiment on a large number of examples generated a significant amount of output. The output results are summarized mainly in visual approaches. To visualize the output data conveniently we adopt two commonly used types of plots: performance profiles and box plots. For any number of examples tested and any predetermined performance measure, each solver could be associated with any such plots. All solvers can then be visually compared by grouping these plots together.

Performance profiles as introduced in \cite{dolan2002perprof} use the idea of comparing the ratio of one solver’s performance measure with the best (minimum) performance measure among all solvers for that problem. More specifically, for each problem $p$ and solver $s$, let $t_{p,s}$ define a performance measure that we want to compare. We calculate the performance ratio as 
\[ r_{p,s}=\frac{t_{p,s}}{\min \{t_{p,s}:s\in S \} }\]
where $S$ is the set of solvers. We plot the resulting data using an empirical CDF plot. 

Box plots for output data give us a visualization of five summary statistics of any given performance measure. The bottom and top of each box are the 25th and 75th percentiles of the sample. The horizontal line in the middle of the box shows the median of the sample data. The vertical lines extending from the box go to the upper and lower extreme. The observations beyond the upper and lower extremes are marked as outliers. For all the box plots in this article, a value is considered as an outlier (marked as $+$ sign) if it is more than 1.5 times the interquartile range away from the bottom or top of the box.  
\subsection{Relative gap percentage and CPU time as a performance measure} 
To compare the solution quality we use the Relative Gap $\%$ defined as 
\[ \left(  \frac{\Tilde{f}- f^{*}}{f^{*}} \right)100,\] 
where, for a particular example, $f^{*}$ is the overall best function value found by any algorithm for that example, and $\Tilde{f}$ is the function value from a given algorithm. A small Relative Gap $\%$ implies that the solution from a given algorithm is close to the best solution. 

The second performance measure we compare is the CPU time taken by an algorithm to find the optimal solution. A smaller CPU time with a compatible final function value $\Tilde{f}$ is a good indication of the efficiency of an algorithm.
\subsection{Synthetic data sets} 
In this section, we present the test results for experiments using synthetic data sets. We intentionally diversified dimensions, structure of underlying true solutions, degree of noise correlations, and signal to noise ratios. 
\subsubsection{Test data setup} 
We have tested the suboptimal methods using synthetic data sets constructed as follows. Firstly, we find the design matrix $X \in \mathbb{R}^{n \times p}$ by sampling each row (i.i.d.) from a $p$-dimensional multivariate normal distribution $N(0,\Sigma)$, with mean zero and covariance matrix $\Sigma$. We normalize the columns of $X$ to have zero mean and unit $l_{2}$-norm. We construct the coefficient vector $\beta^{0}$. We choose a noise vector $\varepsilon$ (i.i.d.) from the normal distribution $N(0,\sigma^{2})$, where the variance $\sigma^{2}$ is chosen according to the given signal-to-noise (SNR) ratio defined as
\( \text{SNR}:=\frac{\|X\beta^{0} \|_{2}^{2}}{ \sigma^{2} }.  \)
Finally, we get the response vector $y$ using $y=X\beta^{0}+\varepsilon$. 
\tablename{ \ref{tab:sma_med_lar_eg_setup}} shows test examples we used in 4 different dimension groups for both OD and UD cases. We chose $k_{0}$ (the number of non-zero entries in $\beta^{0}$) to be 10. We want to select the best set of 5 and 10 predictors, i.e. $k\in \{5,10\}$. We tested three examples by varying $\beta^{0}$.   
\begin{table}[h!]
    \centering
    \caption{Test examples setup}
    \label{tab:sma_med_lar_eg_setup}
    \begin{tabular}{llll}
    \hline\noalign{\smallskip}
     Type & $p$ & $n$ for OD & $n$ for UD   \\
    \noalign{\smallskip}\hline\noalign{\smallskip} 
     small-1 & 20 & 100 & 10 \\
     small-2 & 40 & 200 & 20 \\
     small-3 & 60 & 300 & 30 \\
     small-4 & 80 & 400 & 40 \\
     \noalign{\smallskip}\hline\noalign{\smallskip}
     medium-1 & 200 & 1000 & 100 \\
     medium-2 & 300 & 1000 & 100 \\
     medium-3 & 400 & 2000 & 100 \\
     medium-4 & 500 & 2000 & 100 \\
     \noalign{\smallskip}\hline\noalign{\smallskip}
     large-1 & 800 & 4000 & 200 \\
     large-2 & 1000 & 4000 & 200 \\
     large-3 & 1500 & 8000 & 300 \\
     large-4 & 2000 & 8000 & 300 \\
     \noalign{\smallskip}\hline
    \end{tabular}  
\end{table}
\begin{example}
 Generate $\beta^{0}$ such that $\beta_{i}^{0}=1$ for $k_{0}$ equally spaced indices from the set $\{1,...,p \}$, rounding to the greatest integer if needed.   
\end{example}

\begin{example}
Generate $\beta^{0}$ by assigning the first $k_{0}$ entries as $1$, that is $\beta_{i}^{0}=1$ for $i=1,...,k_{0}$. 
\end{example}

\begin{example}
Generate $\beta^{0}$ by picking a random subset from $\{1,...,p\}$ of $k_{0}$ indices, and then we assign random integer values between $1$ and $5$ to those indexed variables.   
\end{example}
We run the three examples in small, medium, and large dimension settings as given in \tablename{ \ref{tab:sma_med_lar_eg_setup}} with 4 SNR values, SNR $\in$ $ \{0.05, 0.5, 1, 5 \}$. 
 We used a CPU time limit of 600 seconds for all the algorithms. For GA and DFO, we also used a max iteration limit of 10,00,000 for all the runs and stopped the algorithm once this limit had been reached, returning the current best solution as the final output. We compare two aspects of these algorithms, solution quality and CPU time. In the testing of GA, the constants $CR$ and $MR$ are set to 0.8 and 0.2, respectively, and the population size $s$ is set to 10. For the DFO method, the value of $L$ is taken to be $\lambda_{\max}(X^{t}X)$. For DFOn, we run DFO with 20 different initial points, each with a different support, and choose the best solution among all the runs. We compared the algorithms on both constant and exponentially correlated data.

\subsubsection{Test results with constant correlation data} 
For the synthetic data sets with constant correlation, the covariance matrix $\Sigma$ is chosen such that $\Sigma_{i,j}=0.8$ when $i\neq j$ and $\Sigma_{i,i}=1$.
\paragraph{Solution quality}
Figure \ref{fig:boxplotsOdEgsRelGapSfsFsSffsGaDfowrtSNR} shows box plots of the Relative Gap $\%$ for small, medium, and large dimension examples with data generated using constant correlation with four different SNR values for OD case. GA and DFOn are a lot worse than the other four algorithms. DFOn performs better than GA in every instance by providing a smaller median value than GA. Among the remaining four algorithms, SFFS performs better in most of the instances, and not worse in a few instances, than SFS1, SFS2, and FS, consistently providing the smallest Relative Gap $\%$ as can be seen in Figure \ref{fig:boxplotsOdEgsRelGapSfsFsSffsGaDfowrtSNR}, followed by FS. There is no clear winner between SFS1 and SFS2. 
 \begin{figure}[h!]
    \subfloat{\includegraphics[width=0.32\textwidth]{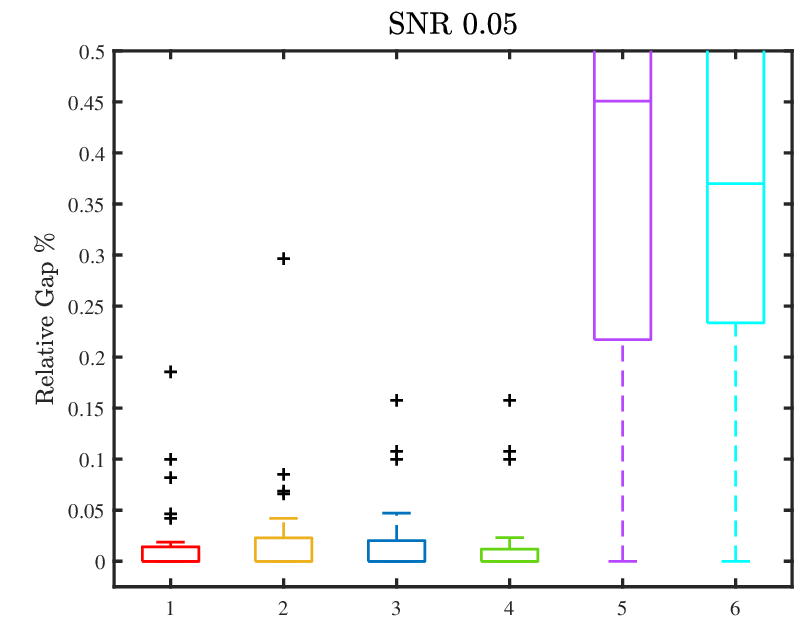}
     }
     \hfill
     \subfloat{\includegraphics[width=0.32\textwidth]{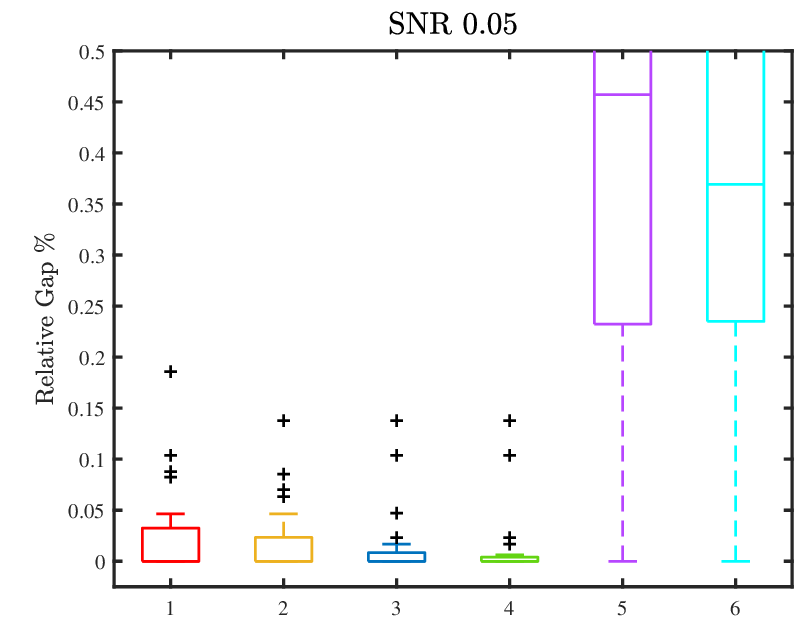}
     }
     \hfill
     \subfloat{\includegraphics[width=0.32\textwidth]{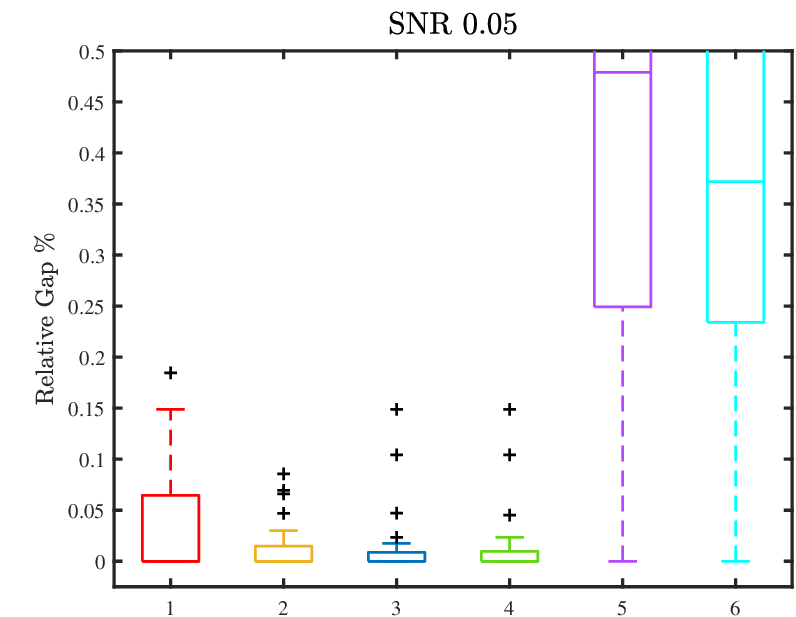}
     }
     
    \subfloat{\includegraphics[width=0.32\textwidth]{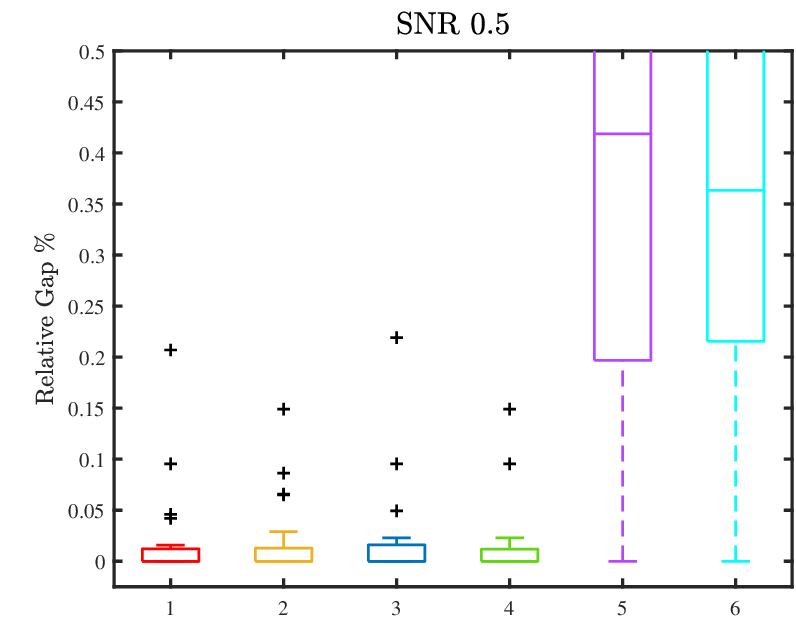}
     }
     \hfill
     \subfloat{\includegraphics[width=0.32\textwidth]{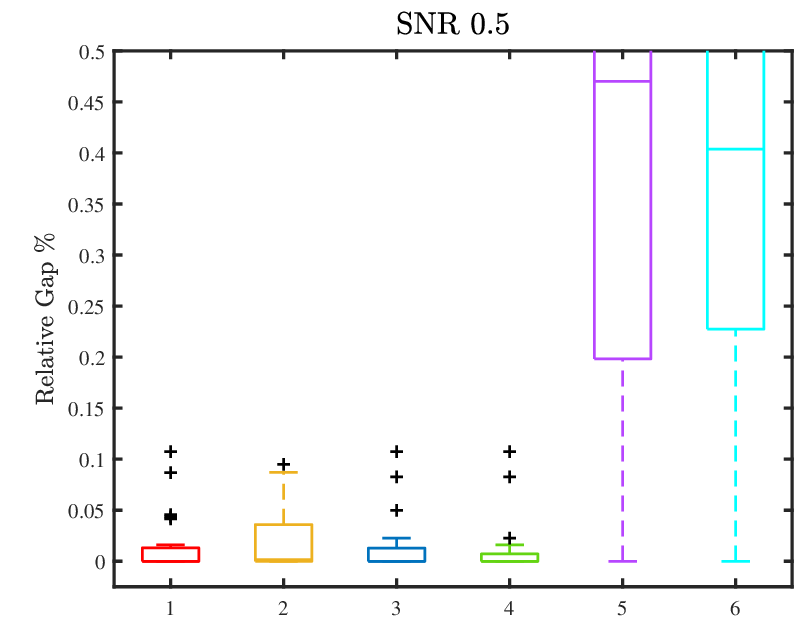}
     }
     \hfill
     \subfloat{\includegraphics[width=0.32\textwidth]{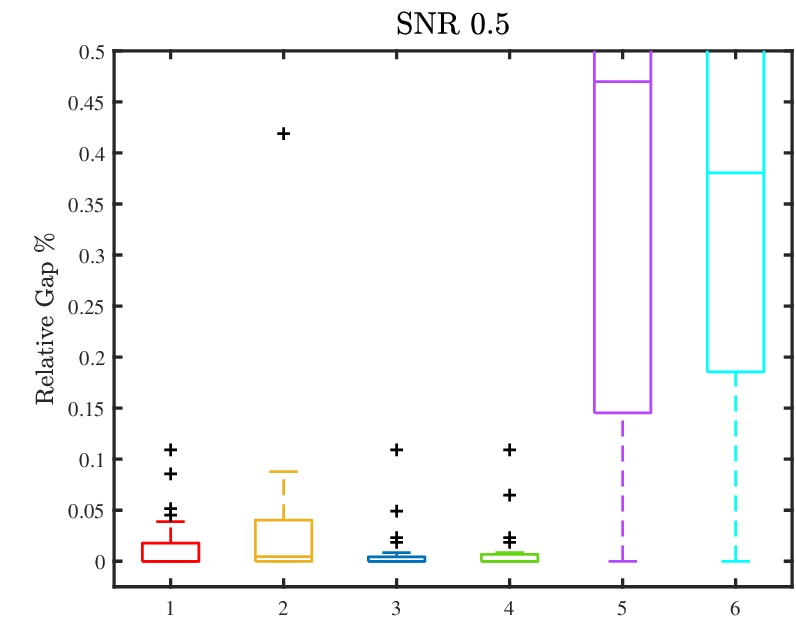}
     }
     
    \subfloat{\includegraphics[width=0.32\textwidth]{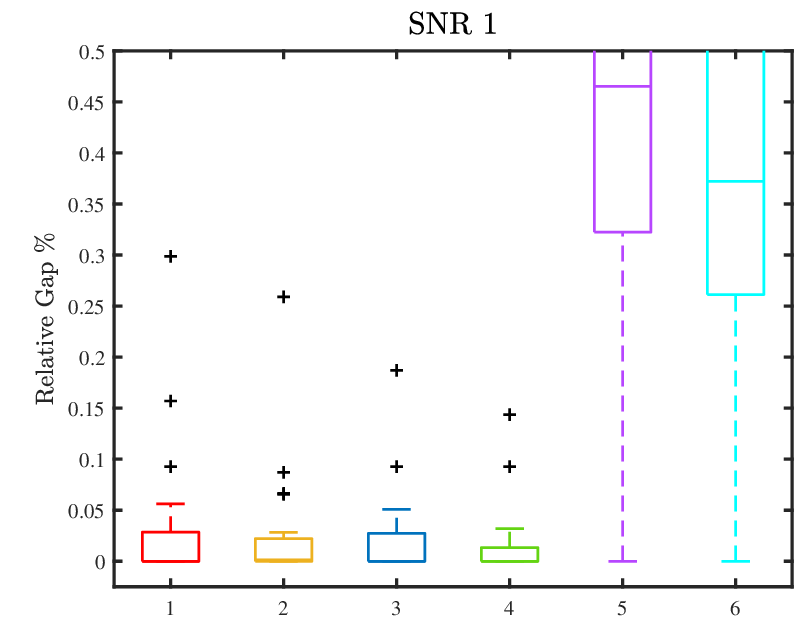}
     }
     \hfill
     \subfloat{\includegraphics[width=0.32\textwidth]{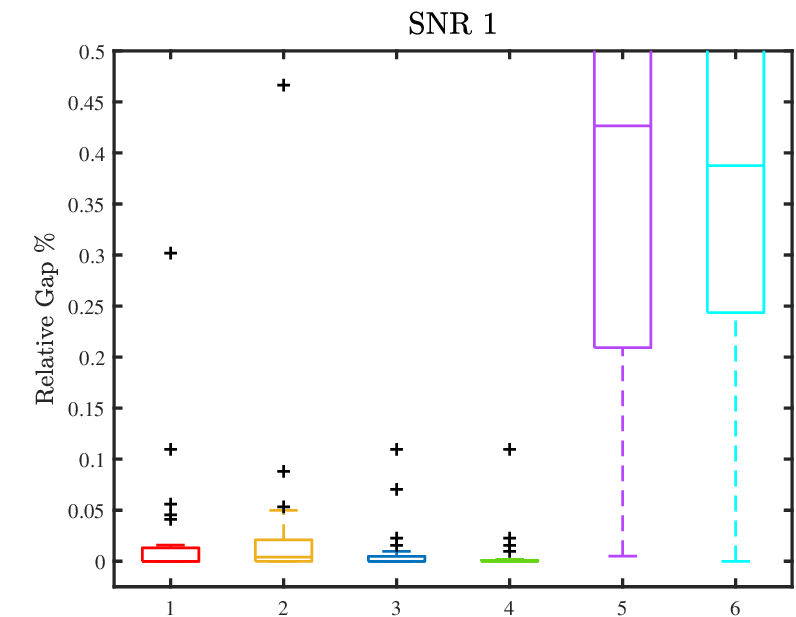}
     }
     \hfill
     \subfloat{\includegraphics[width=0.32\textwidth]{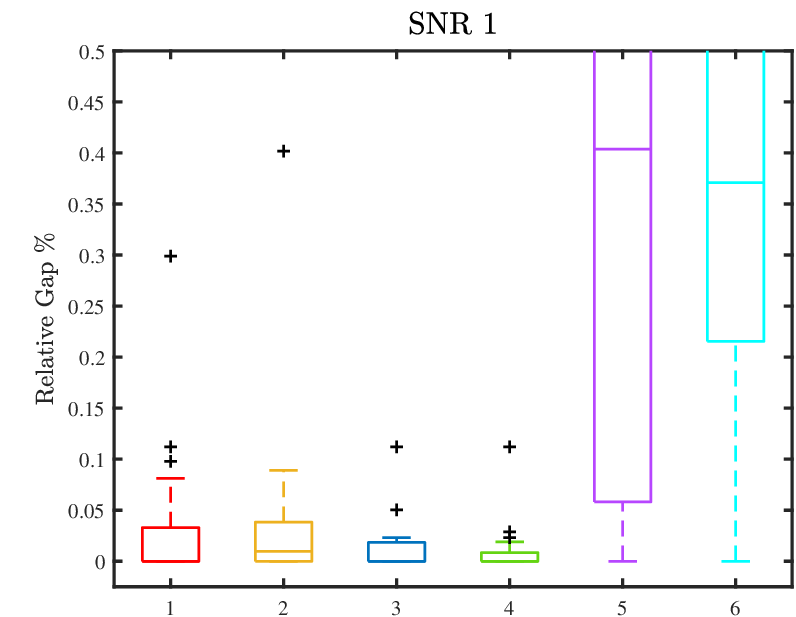}
     }
     
    \subfloat{\includegraphics[width=0.32\textwidth]{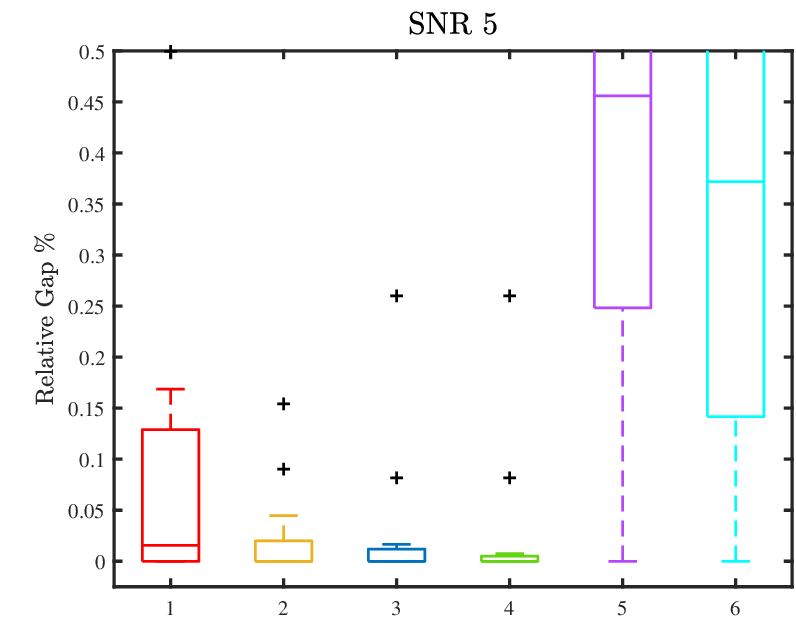}
     }
     \hfill
     \subfloat{\includegraphics[width=0.32\textwidth]{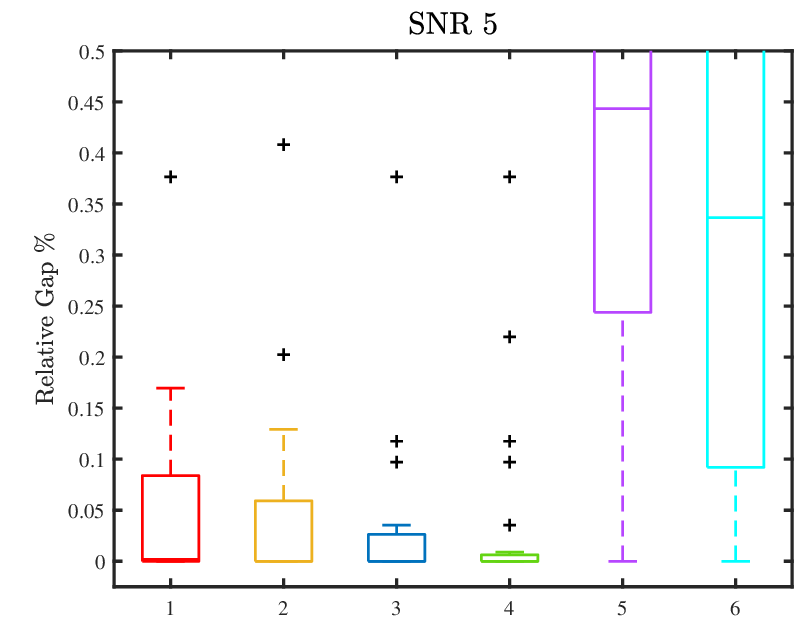}
     }
     \hfill
     \subfloat{\includegraphics[width=0.32\textwidth]{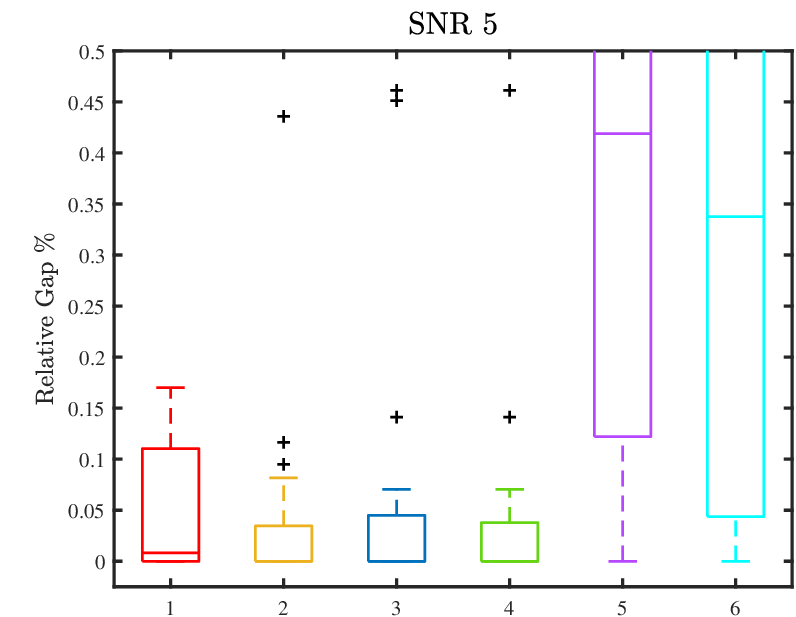}
     }
    \caption{Box plots of Relative Gap $\%$ for examples 1 (left), 2 (middle), and 3 (right) in OD case with constant correlation in small, medium, and large dimension regimes with four SNR values and the two $k$ values where (1) SFS1; (2) SFS2; (3) FS; (4) SFFS; (5) GA; (6) DFOn}
    \label{fig:boxplotsOdEgsRelGapSfsFsSffsGaDfowrtSNR}
\end{figure}

Figure \ref{fig:boxplotsUdEgsRelGapSfsFsSffsGaDfowrtSNR} shows box plots of the Relative Gap $\%$ for small, medium, and large dimension examples with data generated using constant correlation with four different SNR values in UD case. GA and DFOn are worse than the other four algorithms. However, in UD case, GA becomes competitive with DFOn by providing a slightly smaller median value for a few instances and with less variance in the output for every instance than DFOn. Among the remaining algorithms, for SNR 0.05, SFFS is the best, followed by SFS2, SFS1, FS; for the remaining three SNR values, SFS2 is the best, followed by SFFS, SFS1, FS. Thus, on average, SFS2 is considered as the overall best algorithm. 
\begin{figure}[h!]
    \subfloat{\includegraphics[width=0.32\textwidth]{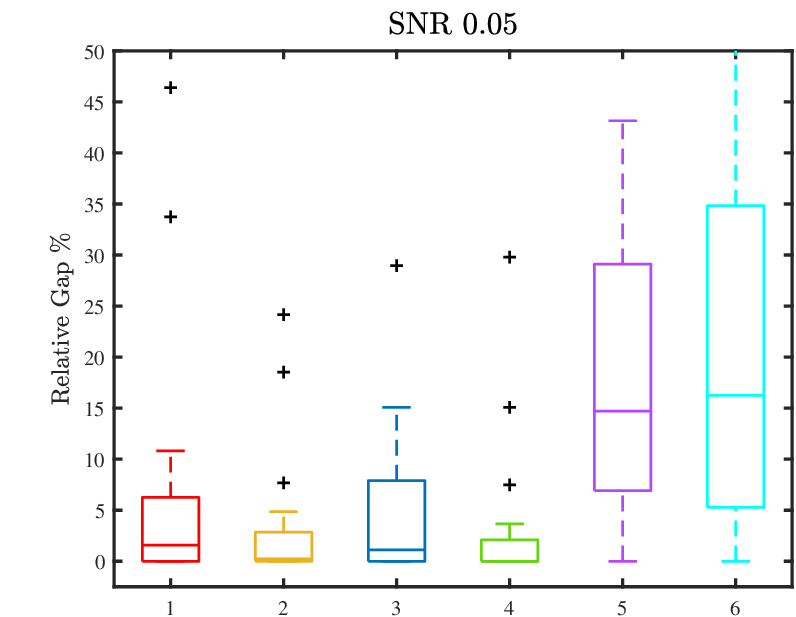}
     }
     \hfill
     \subfloat{\includegraphics[width=0.32\textwidth]{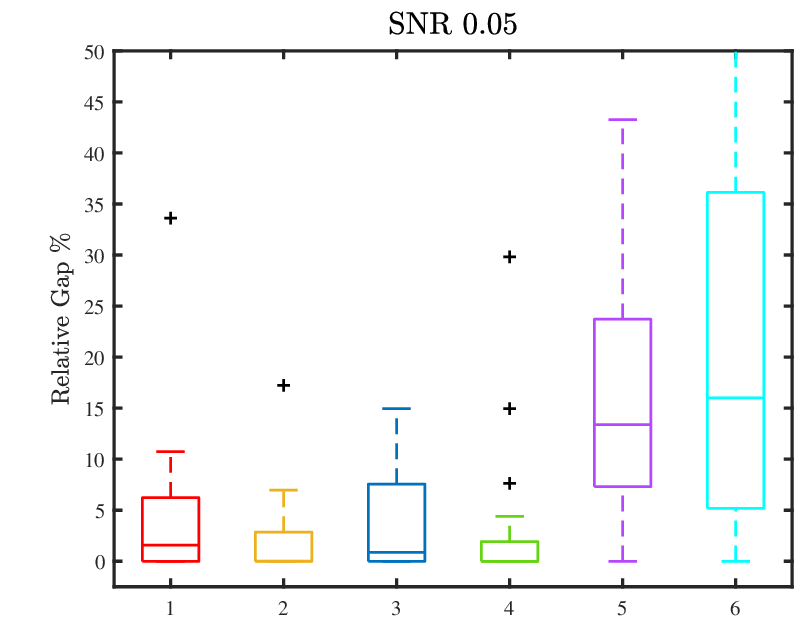}
     }
     \hfill
     \subfloat{\includegraphics[width=0.32\textwidth]{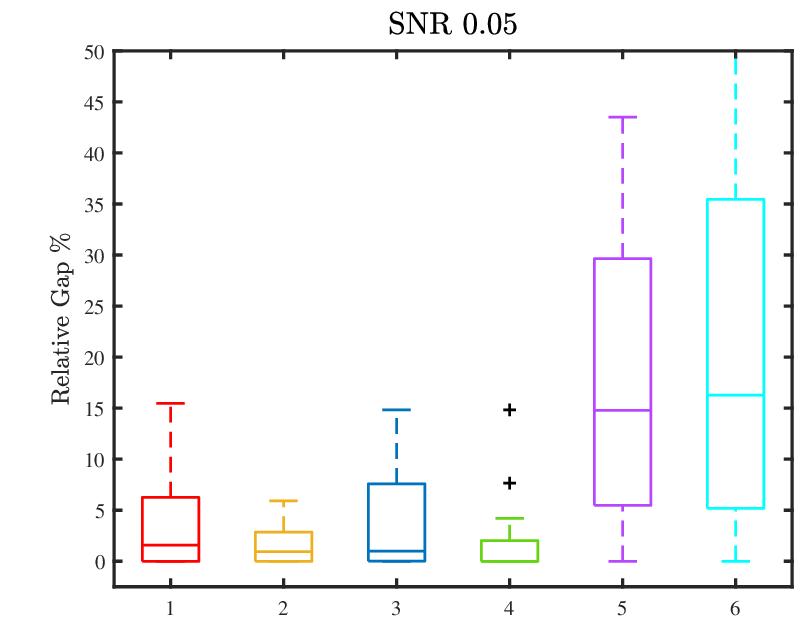}
     }
     
    \subfloat{\includegraphics[width=0.32\textwidth]{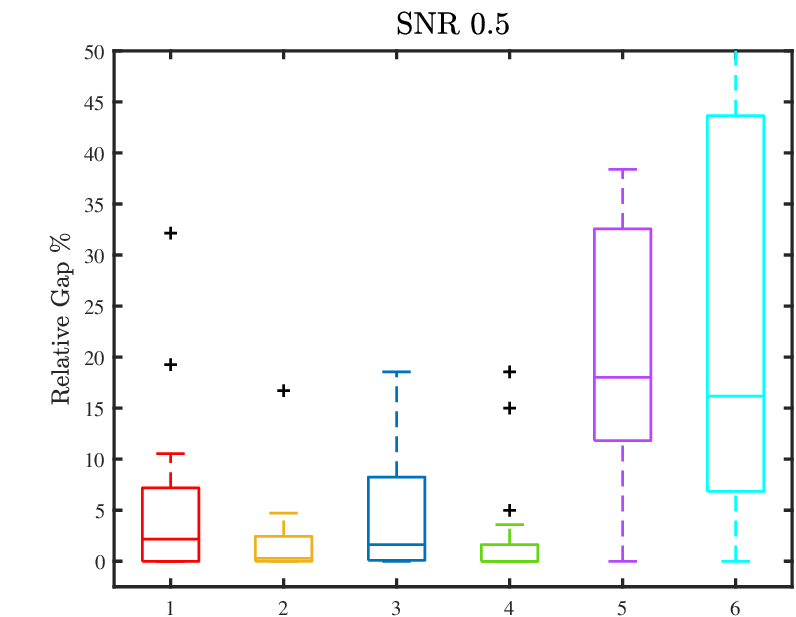}
     }
     \hfill
     \subfloat{\includegraphics[width=0.32\textwidth]{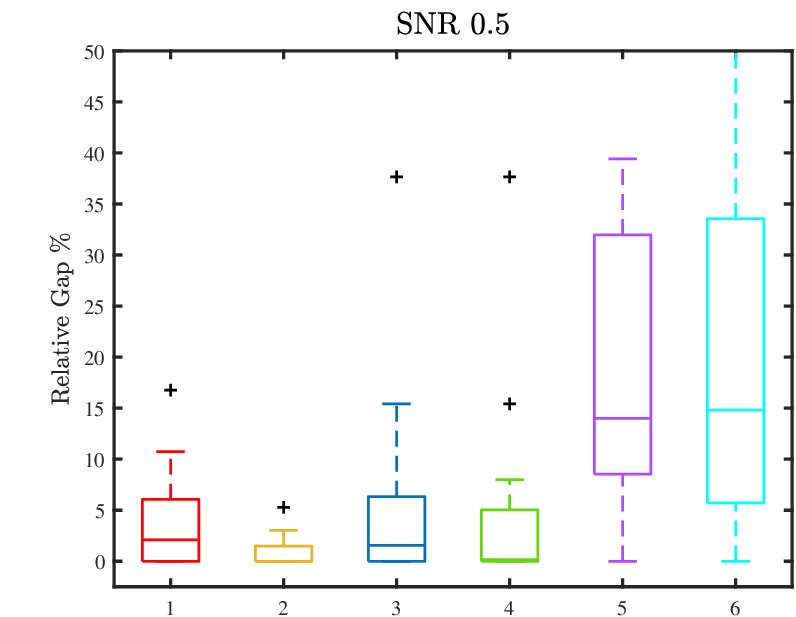}
     }
     \hfill
     \subfloat{\includegraphics[width=0.32\textwidth]{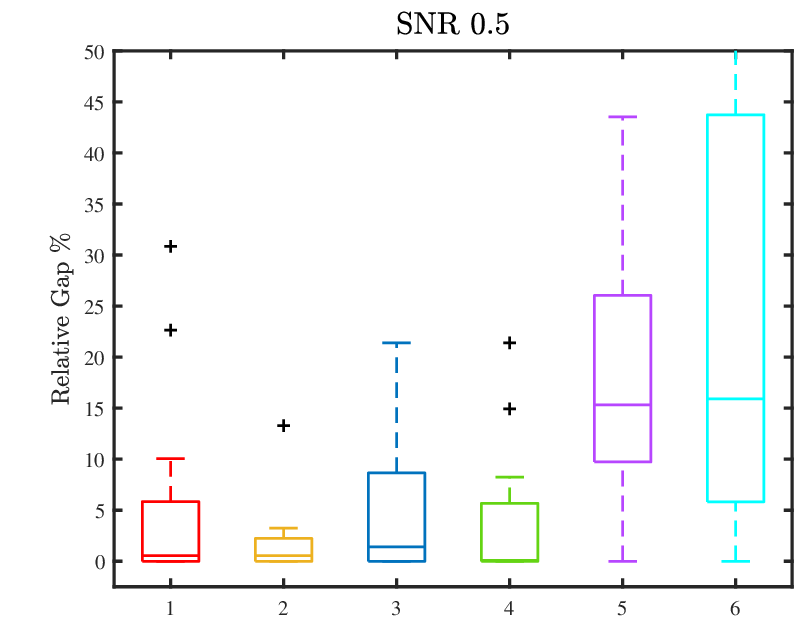}
     }
     
    \subfloat{\includegraphics[width=0.32\textwidth]{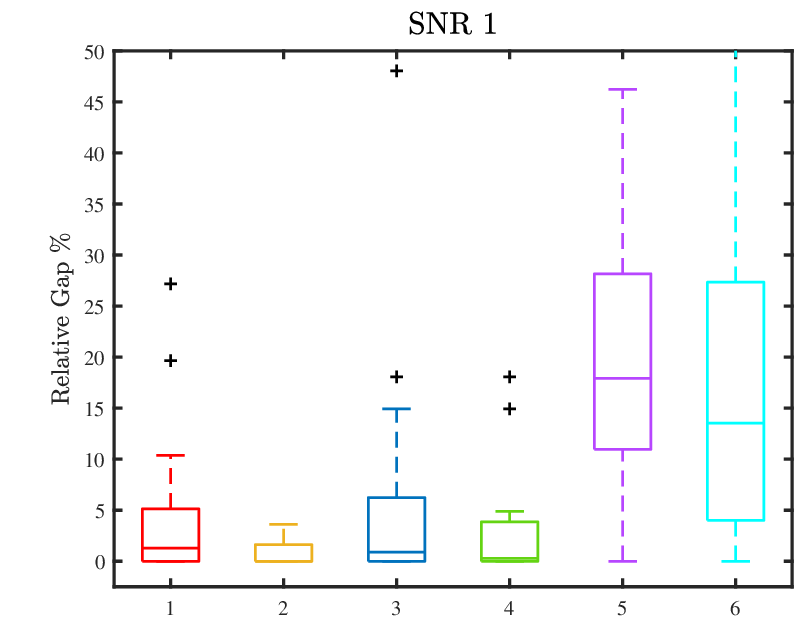}
     }
     \hfill
     \subfloat{\includegraphics[width=0.32\textwidth]{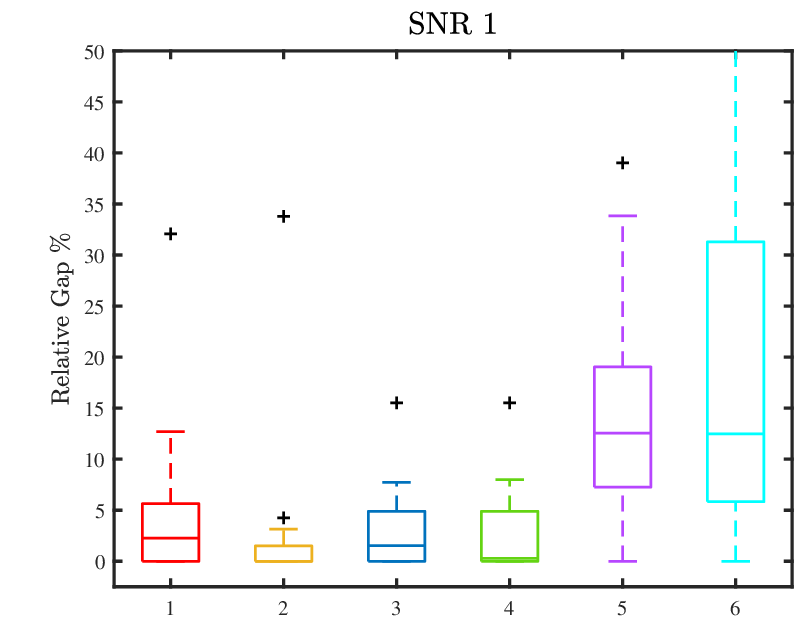}
     }
     \hfill
     \subfloat{\includegraphics[width=0.32\textwidth]{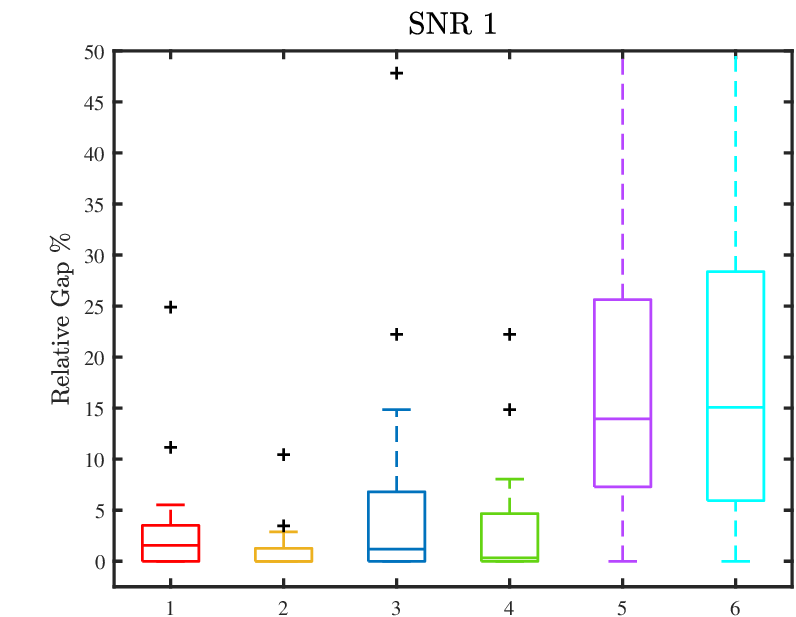}
     }
     
    \subfloat{\includegraphics[width=0.32\textwidth]{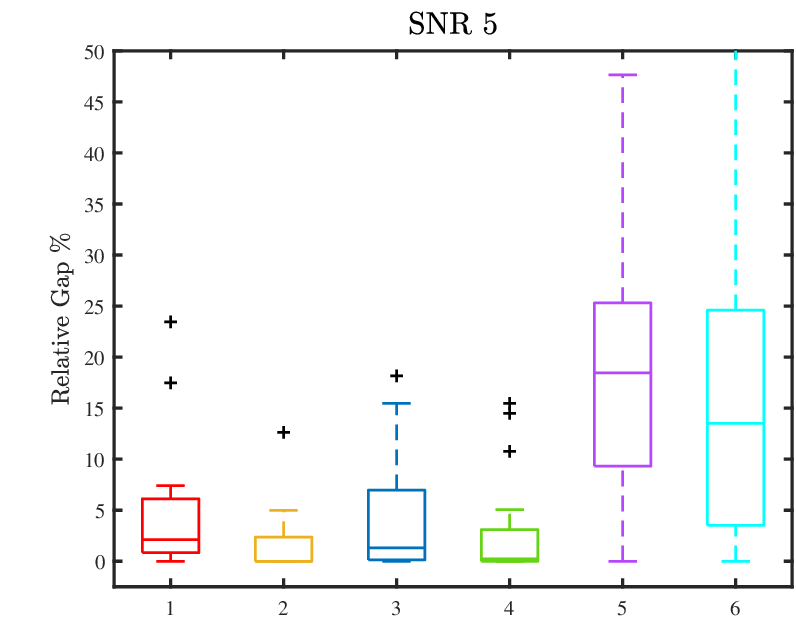}
     }
     \hfill
     \subfloat{\includegraphics[width=0.32\textwidth]{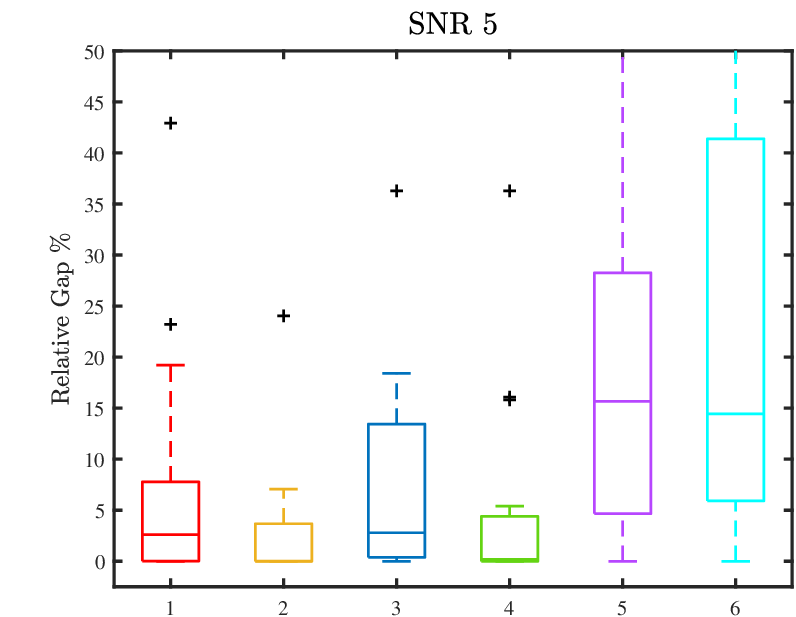}
     }
     \hfill
     \subfloat{\includegraphics[width=0.32\textwidth]{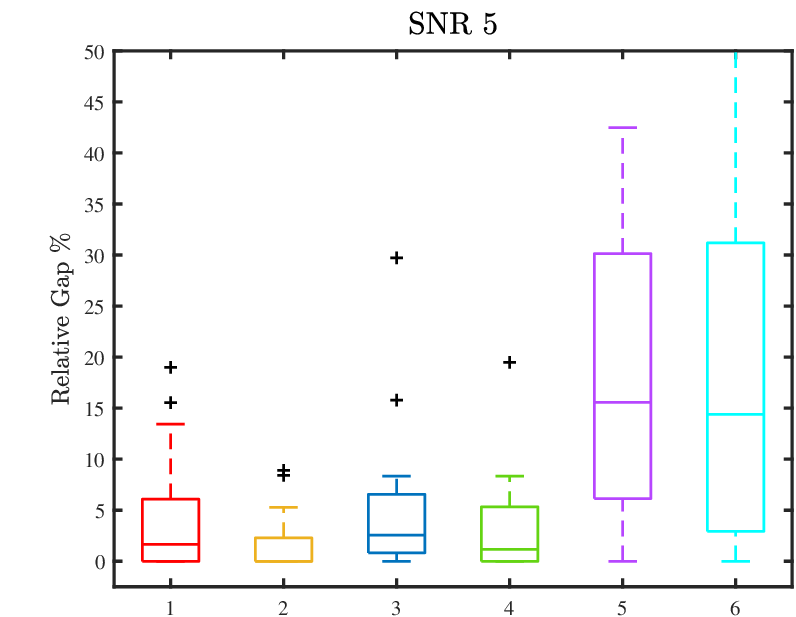}
     }
    \caption{Box plots of Relative Gap $\%$ for example 1 (left), 2 (middle), and 3 (right) in UD case with constant correlation in small, medium, and large dimension regimes with four SNR values and the two $k$ values where (1) SFS1; (2) SFS2; (3) FS; (4) SFFS; (5) GA; (6) DFOn}
    \label{fig:boxplotsUdEgsRelGapSfsFsSffsGaDfowrtSNR}
\end{figure}
\paragraph{CPU time performance}
Tables \ref{tab:cc_ODegs_avgcpu} and \ref{tab:cc_UDegs_avgcpu} show the average CPU time taken by an algorithm for examples 1, 2, and 3, with both $k$ values and in all the three-dimensional regimes. GA and SFS2 are computationally expensive, with GA taking the most CPU time in general. However, CPU time for SFS2 increases significantly for large-3 and large-4 dimension examples, making it the worst among all the other algorithms.
\begin{table}[h]
    \centering
    \caption{Average CPU time taken in seconds for each dimension type by an algorithm for examples 1,2 and 3 in OD case with constant correlation}
    \label{tab:cc_ODegs_avgcpu}
    \begin{tabular}{llllllll}
    \hline\noalign{\smallskip}
     Type  &  SFS1  &  SFS2  &  FS   &  SFFS &  GA & DFO & DFOn\\
    \noalign{\smallskip}\hline\noalign{\smallskip}
    small-1 & 0.02 &  0.06 &  0.02 &  0.05 &  14.02 &  0.01 & 0.13\\
small-2 & 0.01 &  0.17 &  0.02 &  25.03 &  18.63 &  0.00 & 0.10\\
small-3 & 0.01 &  0.16 &  0.05 &  0.06 &  41.32 &  0.01 & 0.09\\
small-4 & 0.03 &  0.43 &  0.04 &  0.07 &  306.34 &  0.00 & 0.11\\
    \noalign{\smallskip}\hline\noalign{\smallskip}
    medium-1 & 0.15 &  4.69 &  0.14 &  0.15 &  171.99 &  0.01 & 0.15\\
medium-2 & 0.17 &  11.73 &  0.24 &  150.17 &  327.89 &  0.01 & 0.18\\
medium-3 & 0.19 &  15.30 &  0.26 &  0.46 &  343.14 &  0.02 & 0.30\\
medium-4 & 0.19 &  31.95 &  0.28 &  0.28 &  343.41 &  0.02 & 0.35\\
    \noalign{\smallskip}\hline\noalign{\smallskip}
   large-1 & 0.51 &  93.93 &  0.54 &  0.46 &  332.99 &  0.04 & 0.75\\
large-2 & 0.57 &  96.07 &  0.68 &  0.73 &  340.58 &  0.05 & 1.10\\
large-3 & 0.75 &  337.78 &  1.06 &  0.99 &  241.26 &  0.11 & 2.27\\
large-4 & 1.19 &  563.18 &  1.22 &  1.76 &  297.62 &  0.21 & 4.39\\
    \noalign{\smallskip}\hline
    \end{tabular}
\end{table}
\begin{table}[h]
    \centering
    \caption{Average CPU time taken in seconds for each dimension type by an algorithm for examples 1,2 and 3 in UD case with constant correlation}
    \label{tab:cc_UDegs_avgcpu}
    \begin{tabular}{llllllll}
    \hline\noalign{\smallskip}
     Type  &  SFS1    &  SFS2    &  FS   &  SFFS &  GA & DFO & DFOn \\
    \noalign{\smallskip}\hline\noalign{\smallskip}
    small-1 & 0.02 &  0.06 &  0.02 &  225.02 &  7.55 &  0.01 & 0.11\\
small-2 & 0.02 &  0.18 &  0.02 &  0.03 &  3.95 &  0.01 & 0.11\\
small-3 & 0.03 &  0.34 &  0.05 &  25.05 &  21.53 &  0.00 & 0.11\\
small-4 & 0.03 &  0.61 &  0.04 &  0.07 &  27.66 &  0.00 & 0.10\\
    \noalign{\smallskip}\hline\noalign{\smallskip}
    medium-1 & 0.14 &  4.33 &  0.16 &  0.18 &  378.38 &  0.01 & 0.14\\
medium-2 & 0.16 &  12.12 &  0.23 &  0.21 &  343.94 &  0.01 & 0.18\\
medium-3 & 0.27 &  18.39 &  0.25 &  0.36 &  75.03 &  0.02 & 0.30\\
medium-4 & 0.20 &  23.52 &  0.28 &  0.42 &  280.49 &  0.02 & 0.36\\
    \noalign{\smallskip}\hline\noalign{\smallskip}
large-1 & 0.34 &  114.83 &  0.51 &  100.61 &  132.87 &  0.04 & 0.78\\
large-2 & 0.65 &  136.14 &  0.72 &  50.79 &  301.59 &  0.05  & 1.07\\
large-3 & 1.18 &  364.06 &  1.08 &  125.98 &  199.85 &  0.11 & 2.26\\
large-4 & 0.98 &  507.43 &  1.21 &  126.38 &  227.82 &  0.21 & 4.28\\
    \noalign{\smallskip}\hline
    \end{tabular}
\end{table}

Figures \ref{fig:perprofEgs12ROdCpuSfsFsSffsGaDfo} and \ref{fig:perprofEgs12RUdCpuSfsFsSffsGaDfo} show performance profiles of CPU time for examples 1, 2, and 3 with constant correlation in all the three dimension regimes for the OD and the UD case, respectively, for those examples for which all the algorithms terminated without reaching any hard stop. For clarity of the whole figure, the GA has been excluded from all the performance profiles, and in each of the medium and large dimensions, the SFS2-curve has also been removed. DFO is the fastest. SFS1, FS, and SFFS are close, with SFS1 slightly better than FS and SFFS; FS is slightly better than SFFS. DFOn, being the sum of the CPU time of all the n runs, is worse than SFS1, FS, SFFS, and DFO algorithms.
\begin{figure}[ht]
    \subfloat{\includegraphics[width=0.32\textwidth]{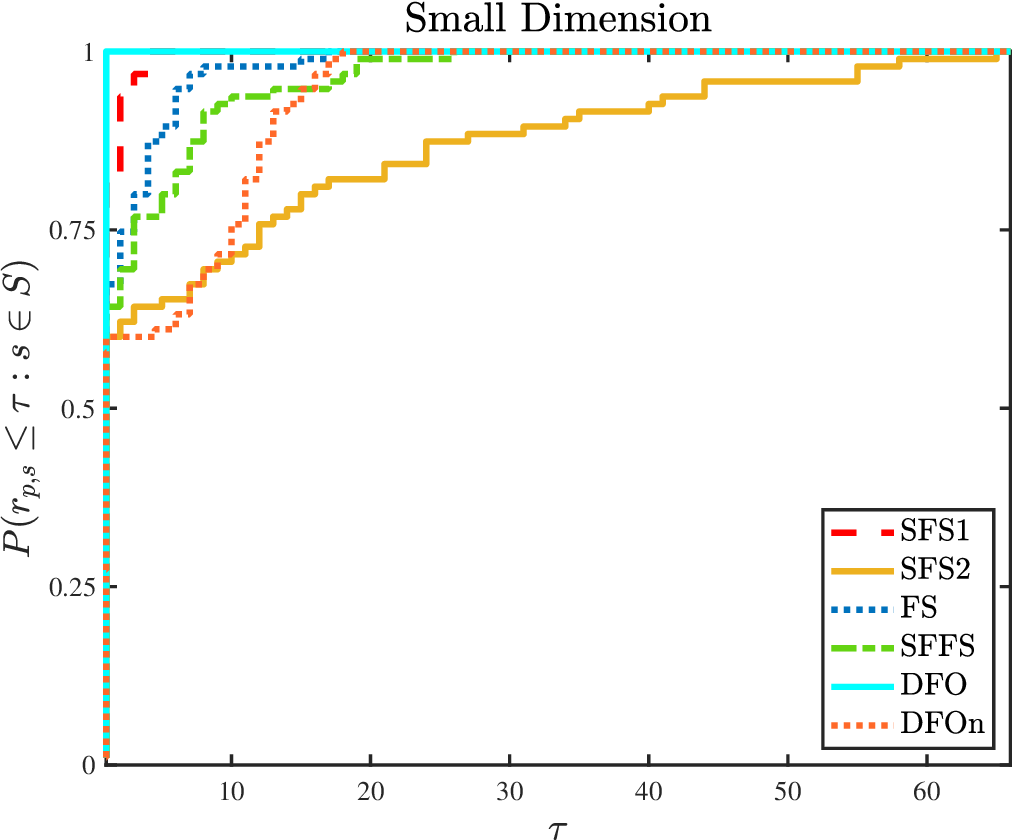}}
    \hfill
    \subfloat{\includegraphics[width=0.32\textwidth]{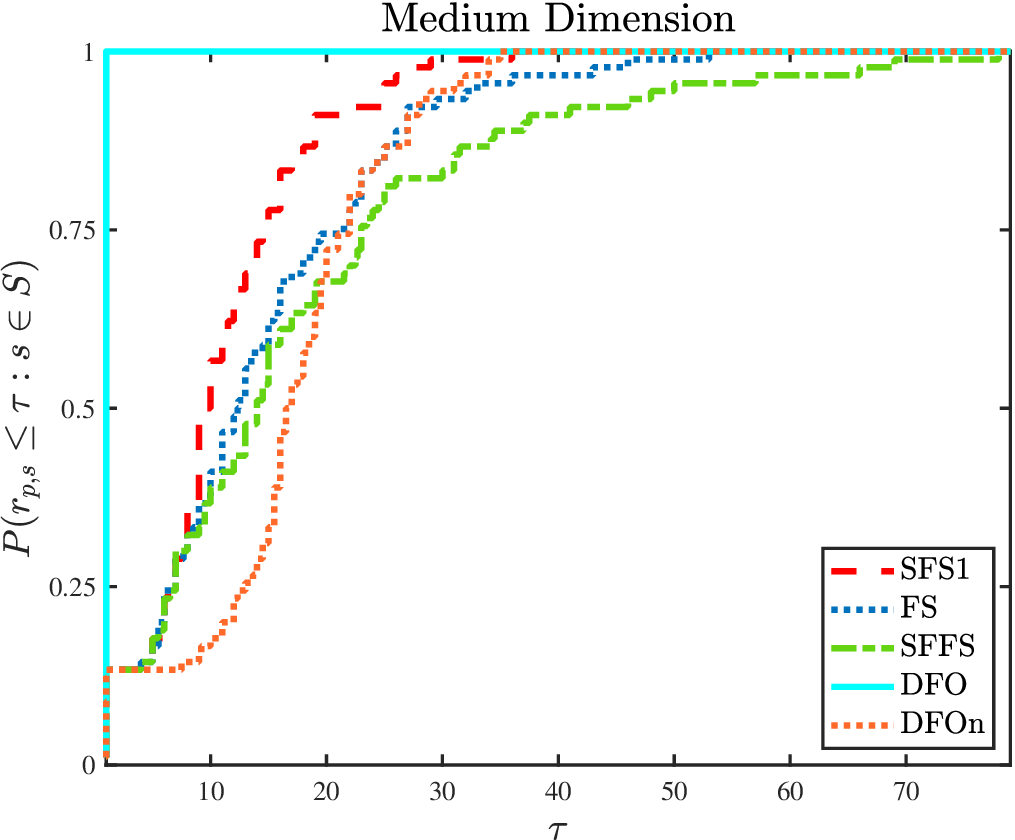}}
    \hfill
    \subfloat{\includegraphics[width=0.32\textwidth]{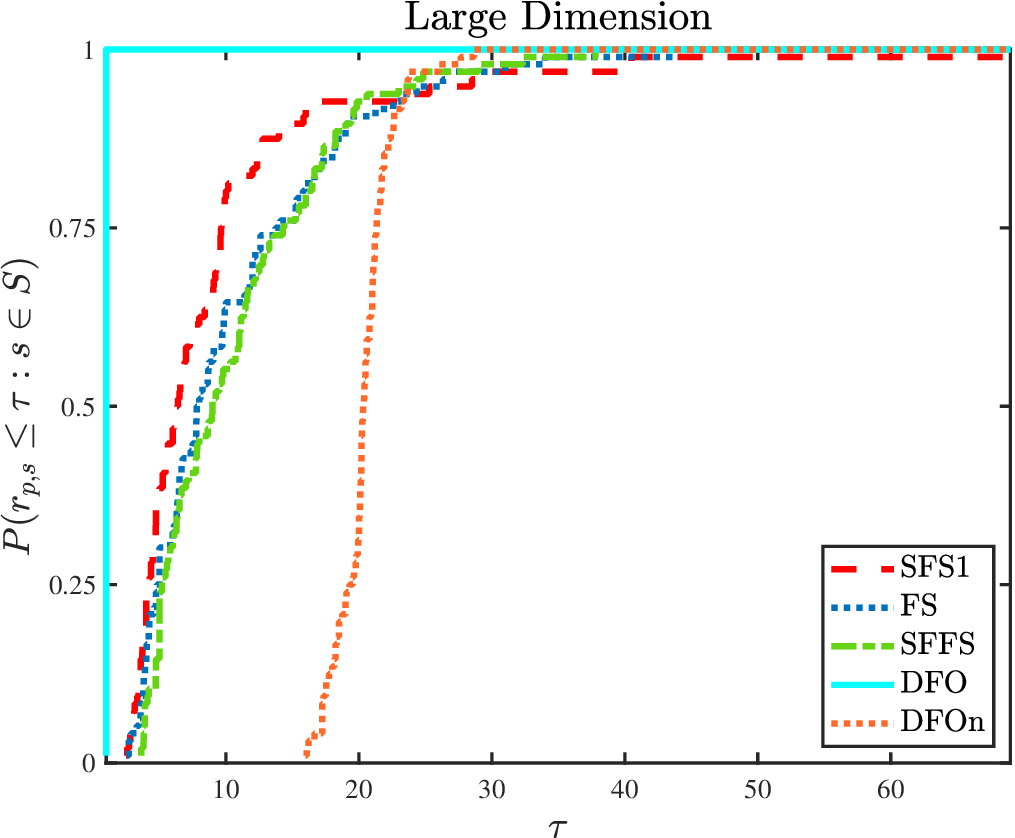}}
    \caption{Performance profiles of CPU time for examples 1, 2, and 3 combined in OD case with constant correlation in small, medium, and large dimension regimes with four SNR values and the two $k$ values}
    \label{fig:perprofEgs12ROdCpuSfsFsSffsGaDfo}
\end{figure}
\begin{figure}[h]
    \subfloat{\includegraphics[width=0.32\textwidth]{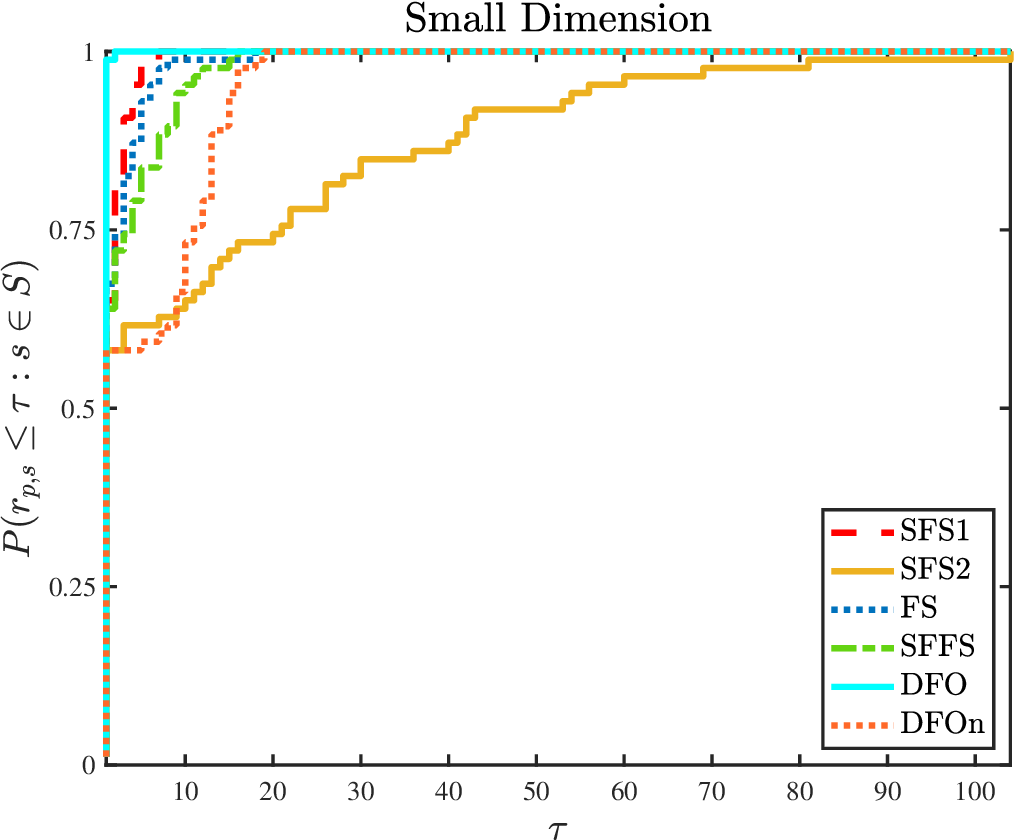}}
    \hfill
    \subfloat{\includegraphics[width=0.32\textwidth]{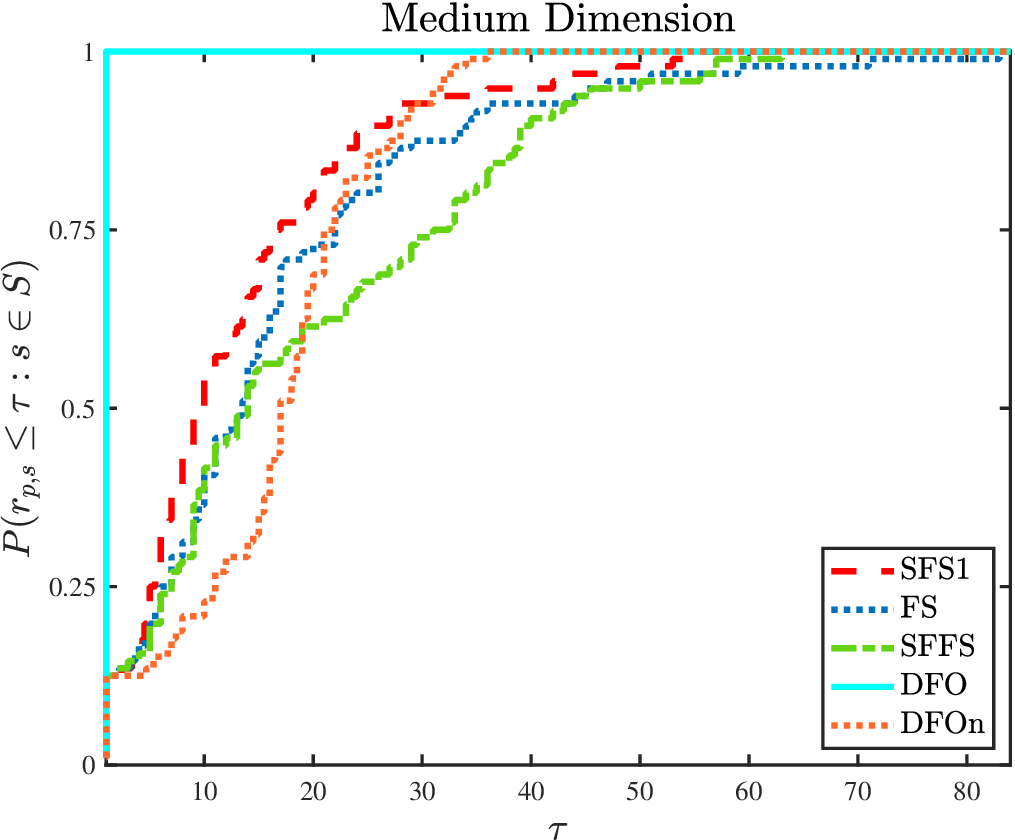}}
    \hfill
    \subfloat{\includegraphics[width=0.32\textwidth]{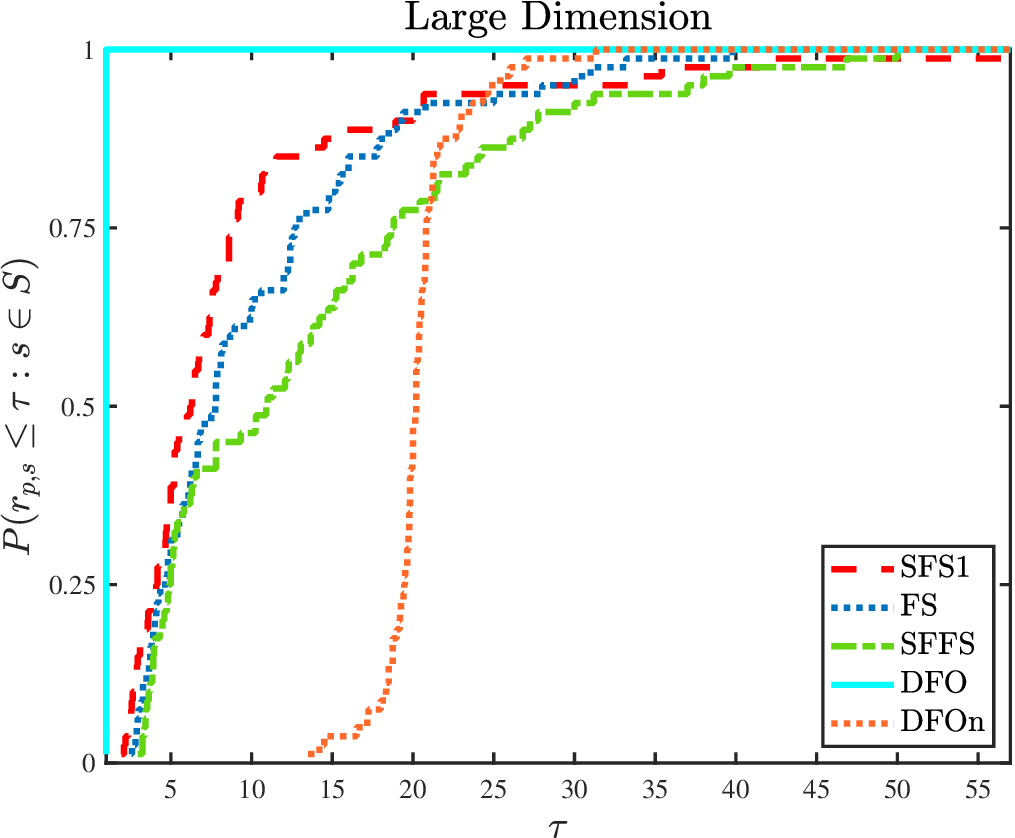}}
    \caption{Performance profiles of CPU time for examples 1, 2, and 3 combined in UD case with constant correlation in small, medium, and large dimension regimes with four SNR values and the two $k$ values}
    \label{fig:perprofEgs12RUdCpuSfsFsSffsGaDfo}
\end{figure}
\begin{table}[h]
    \centering
    \caption{Number of instances for examples 1, 2, and 3 in which algorithms stopped with CPU time or maximum iteration limit out of the total 32 instances for each type with constant correlation in small, medium, and large dimension regimes with four SNR values and the two $k$ values}
    \label{tab:cc_BDegs_hardstopcount}
    \begin{tabular}{lllllllllll}
    \hline\noalign{\smallskip}
    &Type & \multicolumn{3}{l}{Example 1} & \multicolumn{3}{l}{Example 2} & \multicolumn{3}{l}{Example 3}  \\
    \noalign{\smallskip}\hline\noalign{\smallskip}
     &    & SFS2 & SFFS & GA & SFS2 &  SFFS &  GA  & SFS2 & SFFS & GA\\
    \noalign{\smallskip}\hline\noalign{\smallskip}
   OD & small  &  0  &  0  &  10  &  0   &   0   &   9  &   0  &  1   &  10\\
   OD & medium &  0  &  1  &  32  &  0   &   3   &   31  &   0  &  2   &  31\\
   OD & large  &  4  &  0  &  29  &  2   &   0   &   28  &   1  &  0   &  31\\
    \noalign{\smallskip}\hline\noalign{\smallskip}
   UD & small  &  0  &   4   &  6 &  0   &   3   &   4  &  0   &   3  &   5\\
   UD & medium &  0  &   0   &  30 &  0   &   0   &  27  &  0   &   0  &   28\\
   UD & large  &  5  &   5   &  32 &  0   &   8   &   32  &  0   &   3  &   32\\
    \noalign{\smallskip}\hline
    \end{tabular}
\end{table}
Our results show that for every instance, SFS1, FS, and DFO converged (locally) and terminated without reaching the given hard-stopping limits. SFS2 and SFFS terminated prematurely by reaching the maximum CPU time limit provided, whereas GA got terminated using both the maximum iteration limit and maximum CPU time limit. Table \ref{tab:cc_BDegs_hardstopcount} shows these numbers for the three algorithms. SFFS may take more CPU time because SFFS does an additional check to drop previously selected predictors in the search for a better model, and this step can take a significant amount of time if there is more randomness in the data. This is the reason for a higher number of hard CPU time limit stops by SFFS in the UD case than in the OD case.

An algorithm is considered ideal if it ranks on the top in both solution quality and CPU time performance. Our numerical results did not reveal such an algorithm. Under the circumstances, SFS1 and SFS2 are considered good choices for suboptimal algorithms for solving \eqref{bsschp5}.
\subsubsection{Test results with exponential correlation data}
For the synthetic data sets with exponential correlation, the covariance matrix $\Sigma$ is chosen such that $\Sigma_{i,j}=0.8^{|i-j|}$ when $i\neq j$ and $\Sigma_{i,i}=1$.
\paragraph{Solution quality}
Figure \ref{fig:boxplotsOdEcEgsRelGapSfsFsSffsGaDfowrtSNR} shows box plots of the Relative Gap $\%$ for small, medium, and large dimension examples with data generated using exponential correlation with four different SNR values for OD case. GA is the worst. DFOn performs better than GA but is worse than the other algorithms. Among the remaining four algorithms, SFS2 is the best. There is no clear conclusion among SFS1, FS, and SFFS. 
\begin{figure}
    \centering
    \subfloat{\includegraphics[width=0.32\linewidth]{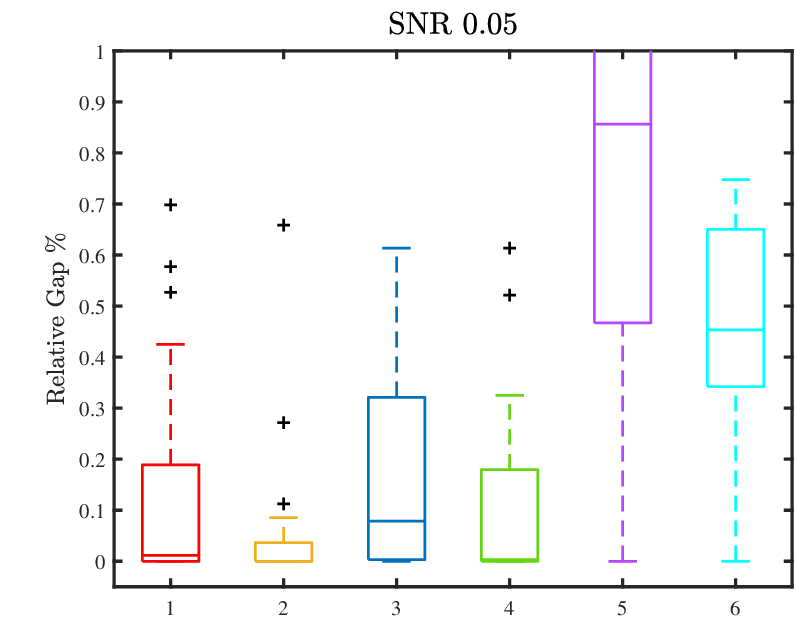}}
    \hfill
    \subfloat{\includegraphics[width=0.32\linewidth]{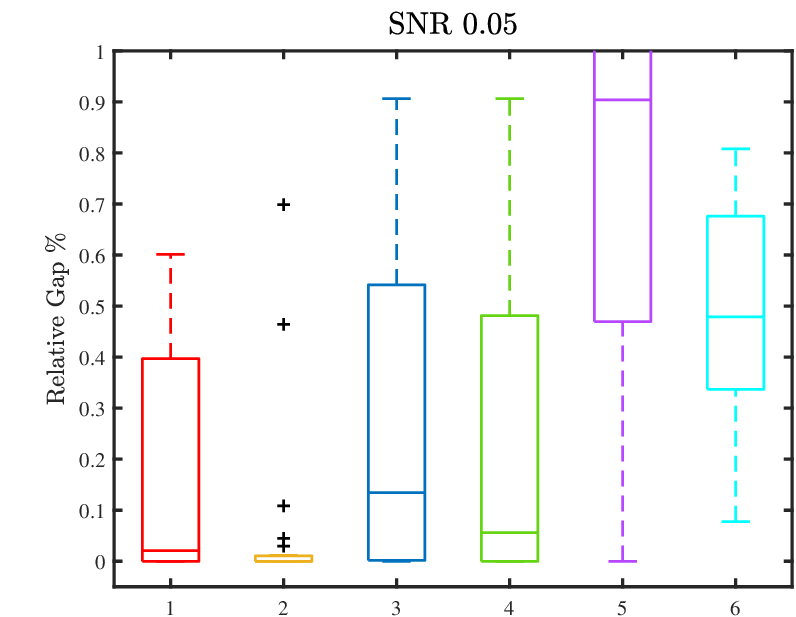}}
    \hfill
    \subfloat{\includegraphics[width=0.32\linewidth]{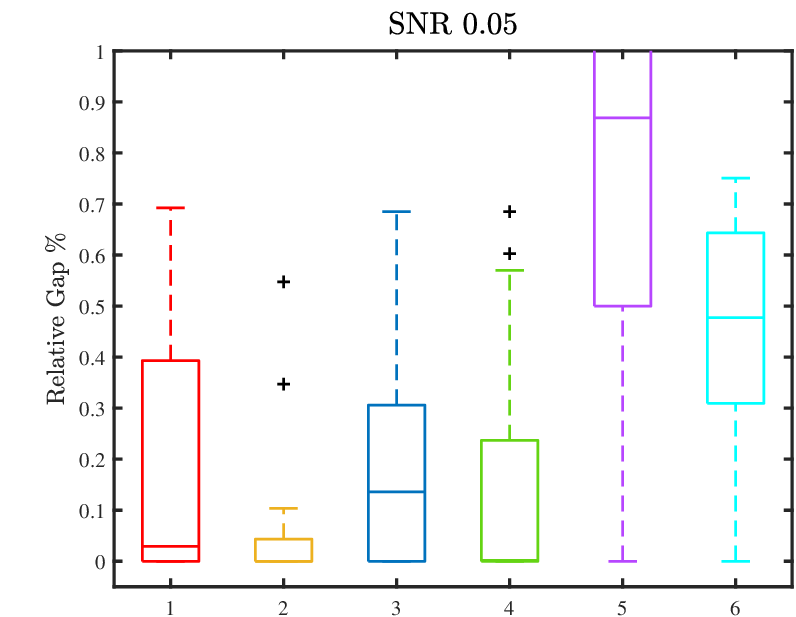}}

    \centering
    \subfloat{\includegraphics[width=0.32\linewidth]{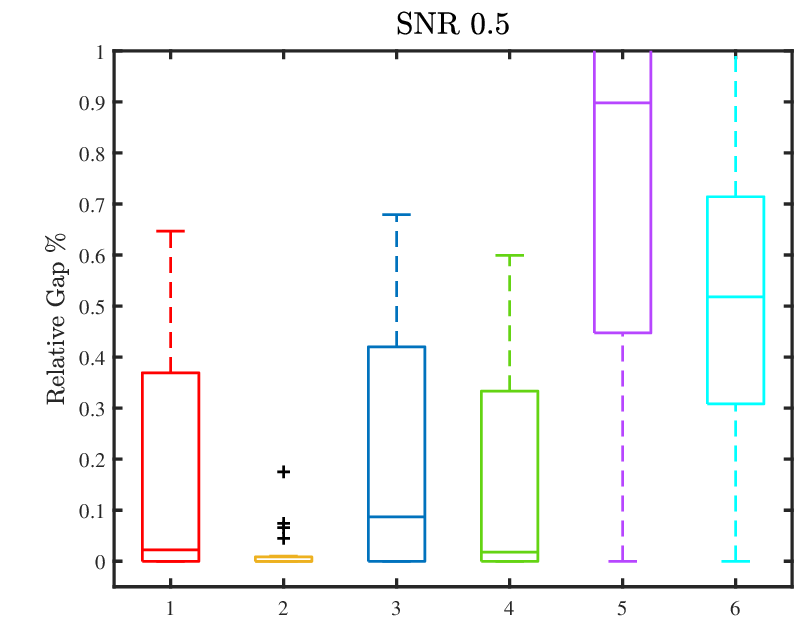}}
    \hfill
    \subfloat{\includegraphics[width=0.32\linewidth]{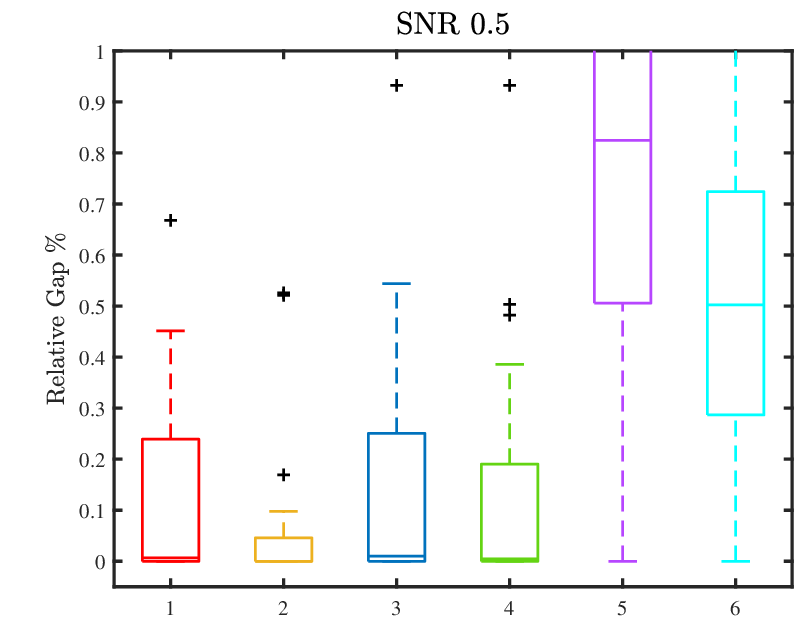}}
    \hfill
    \subfloat{\includegraphics[width=0.32\linewidth]{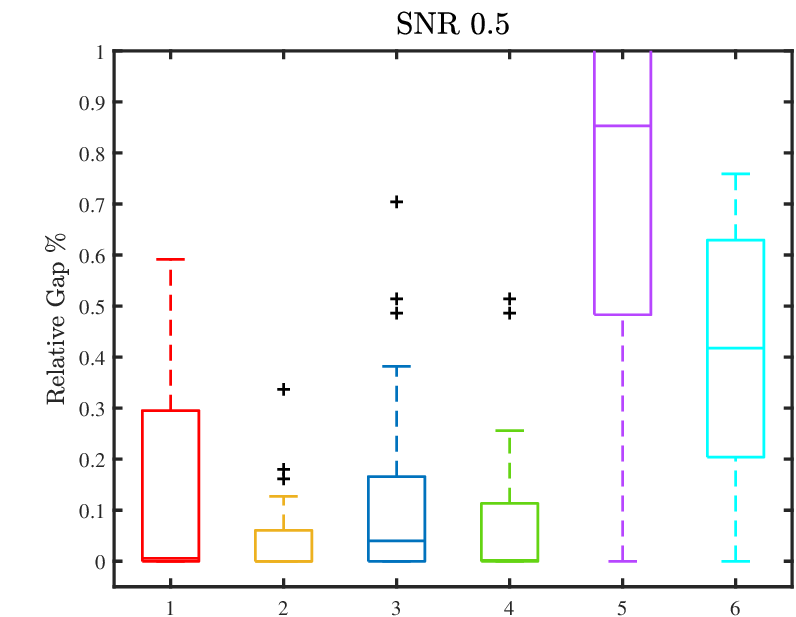}}

    \centering
    \subfloat{\includegraphics[width=0.32\linewidth]{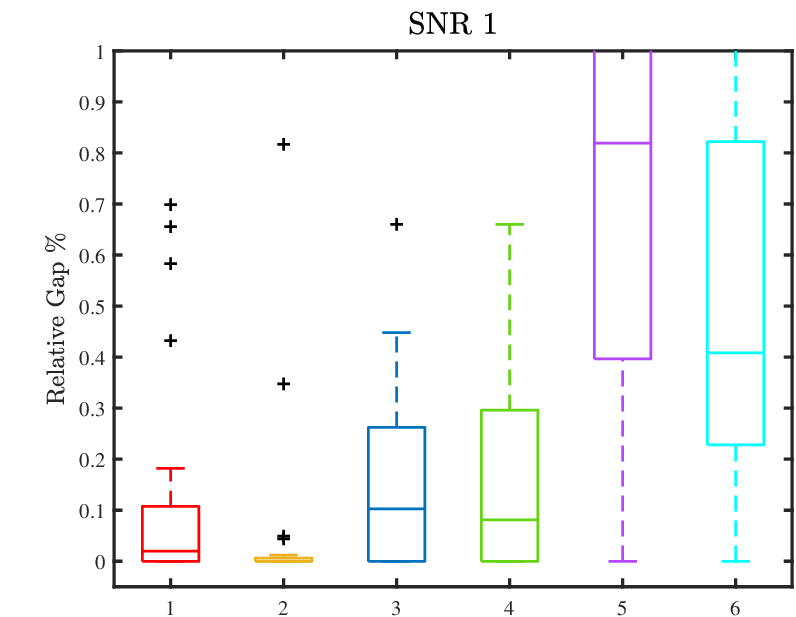}}
    \hfill
    \subfloat{\includegraphics[width=0.32\linewidth]{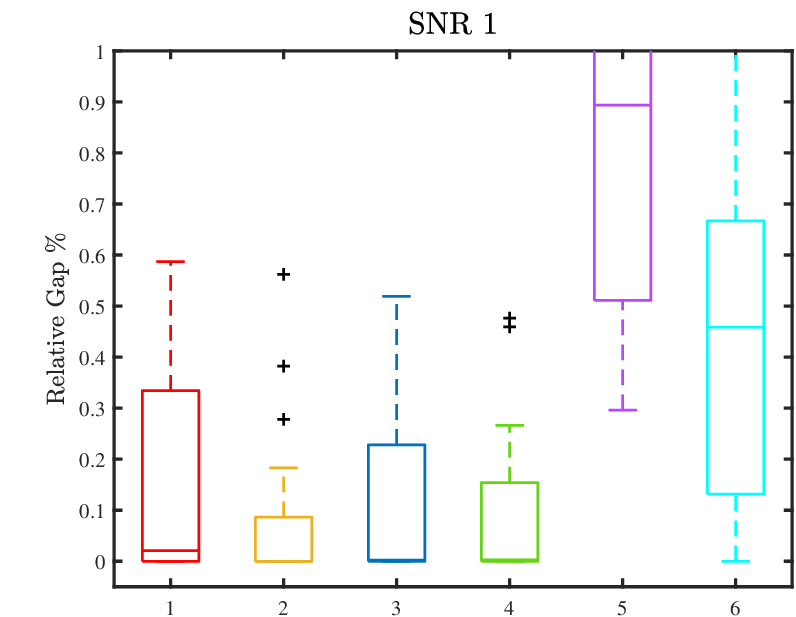}}
    \hfill
    \subfloat{\includegraphics[width=0.32\linewidth]{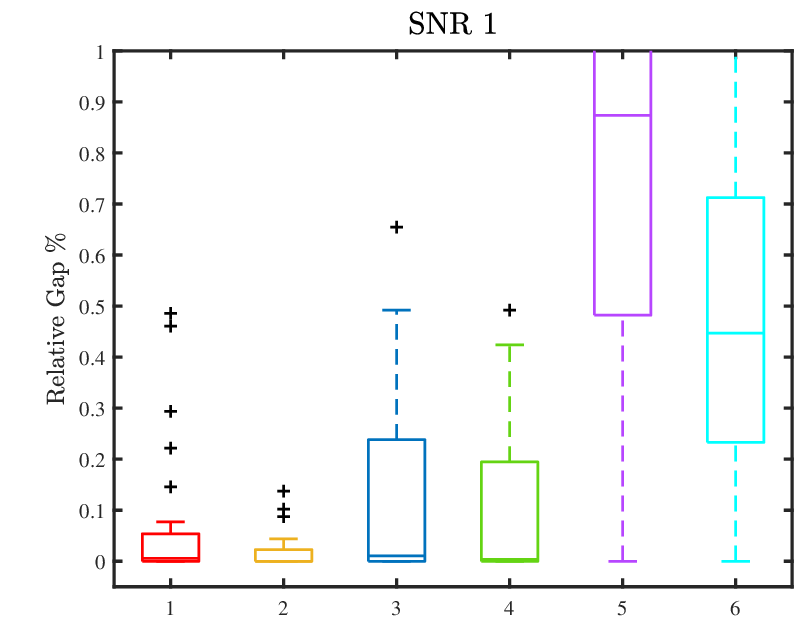}}

    \centering
    \subfloat{\includegraphics[width=0.32\linewidth]{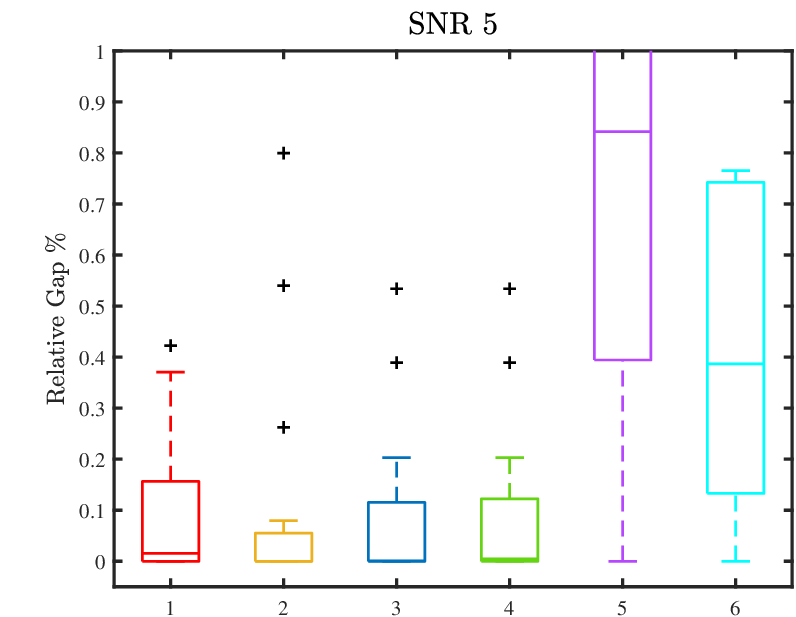}}
    \hfill
    \subfloat{\includegraphics[width=0.32\linewidth]{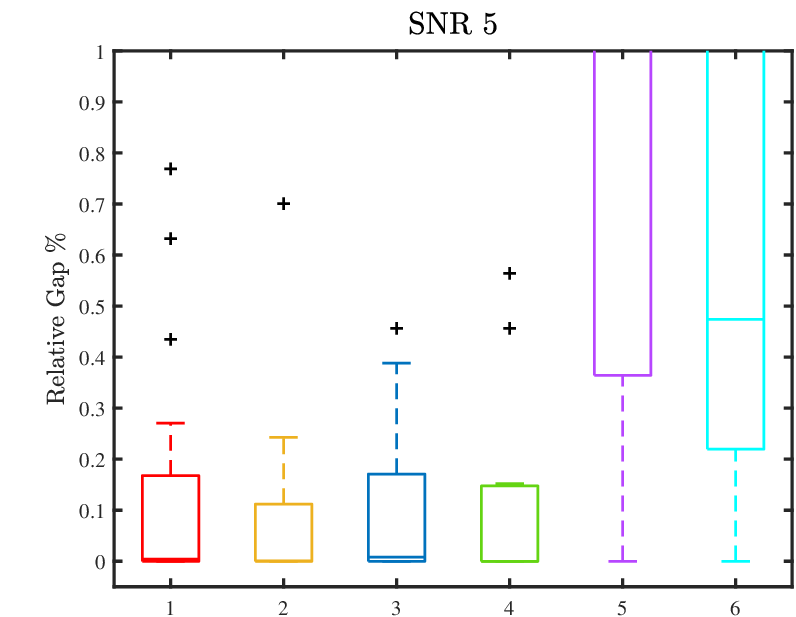}}
    \hfill
    \subfloat{\includegraphics[width=0.32\linewidth]{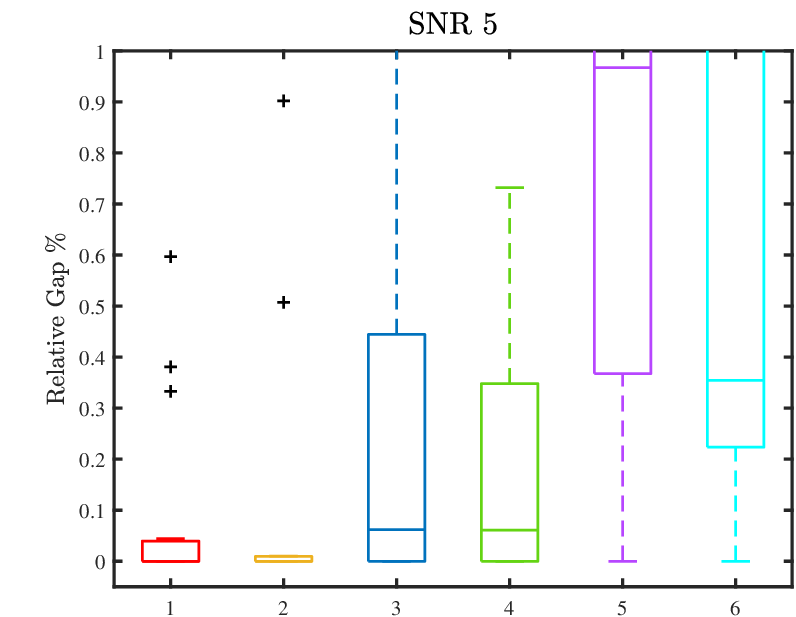}}
    \caption{Box plots of Relative Gap $\%$ for examples 1 (left), 2 (middle), and 3 (right) in OD case with exponential correlation in small, medium, and large dimension regimes with four SNR values and the two $k$ values where (1) SFS1; (2) SFS2; (3) FS; (4) SFFS; (5) GA; (6) DFOn}
    \label{fig:boxplotsOdEcEgsRelGapSfsFsSffsGaDfowrtSNR}
\end{figure} 

Figure \ref{fig:boxplotsUdEcEgsRelGapSfsFsSffsGaDfowrtSNR} shows box plots of the Relative Gap $\%$ for small, medium, and large dimension examples with data generated using exponential correlation with four different SNR values for UD case. There is no clear winner among SFS1, SFS2, and SFFS. But in general, SFS2 performs better than SFFS and SFS1, followed by SFFS and then SFS1. For the remaining three algorithms, FS is worse than SFS1, SFS2, and SFFS but performs better than DFOn and GA. DFOn performs better than GA.
\begin{figure}
    \centering
    \subfloat{\includegraphics[width=0.32\linewidth]{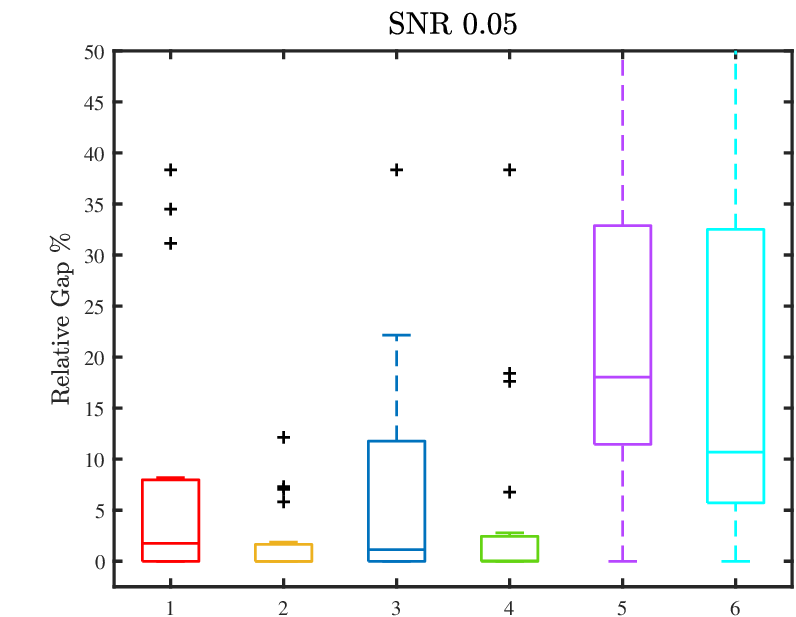}}
    \hfill
    \subfloat{\includegraphics[width=0.32\linewidth]{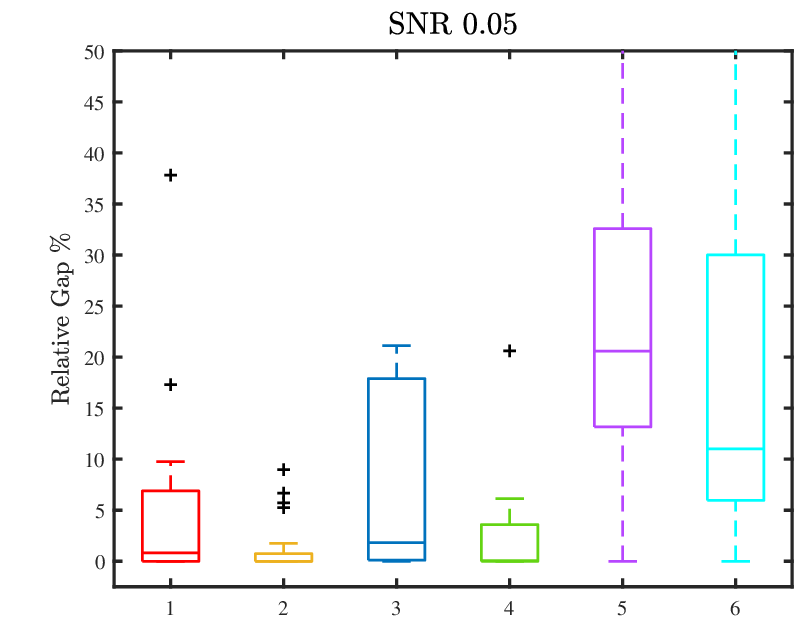}}
    \hfill
    \subfloat{\includegraphics[width=0.32\linewidth]{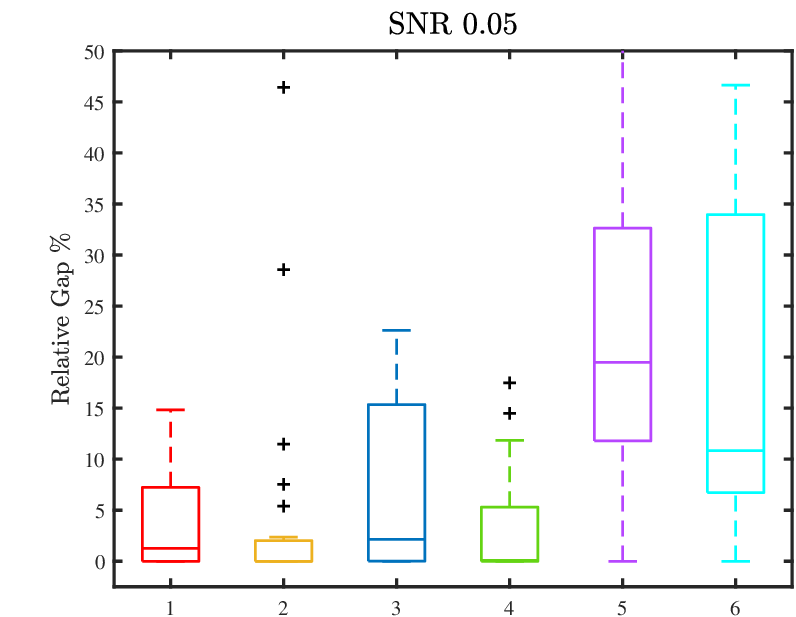}}

    \centering
    \subfloat{\includegraphics[width=0.32\linewidth]{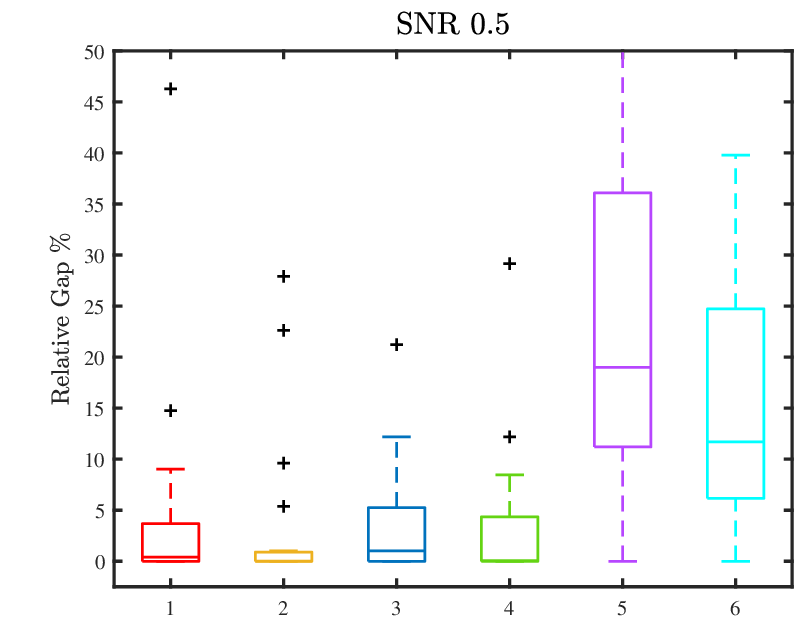}}
    \hfill
    \subfloat{\includegraphics[width=0.32\linewidth]{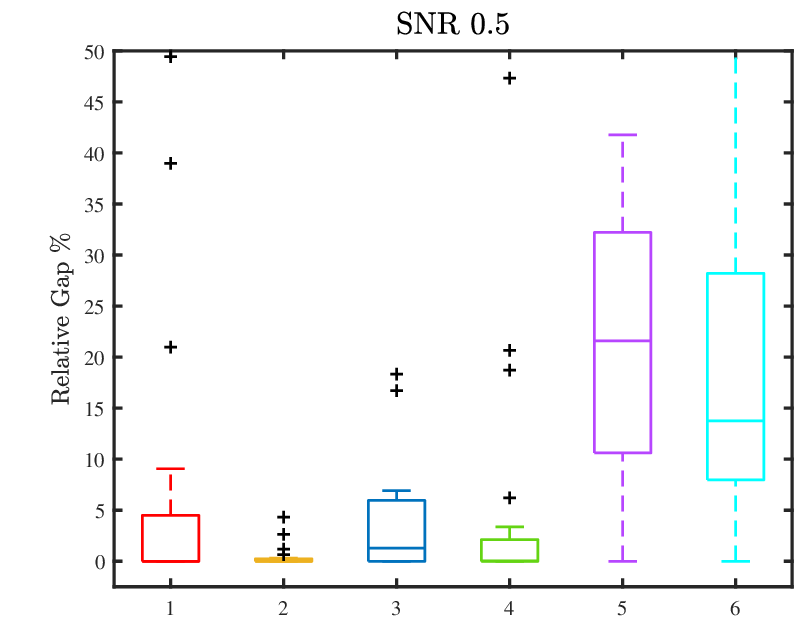}}
    \hfill
    \subfloat{\includegraphics[width=0.32\linewidth]{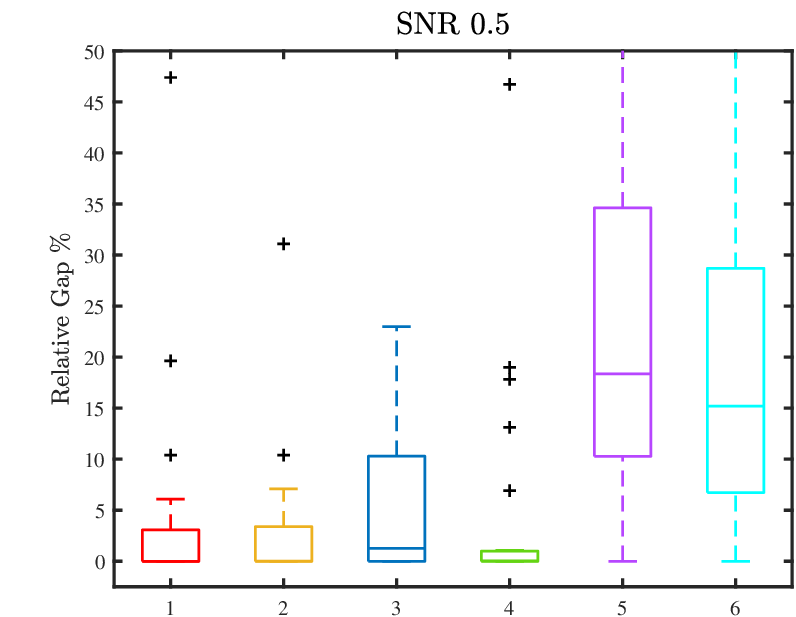}}

    \centering
    \subfloat{\includegraphics[width=0.32\linewidth]{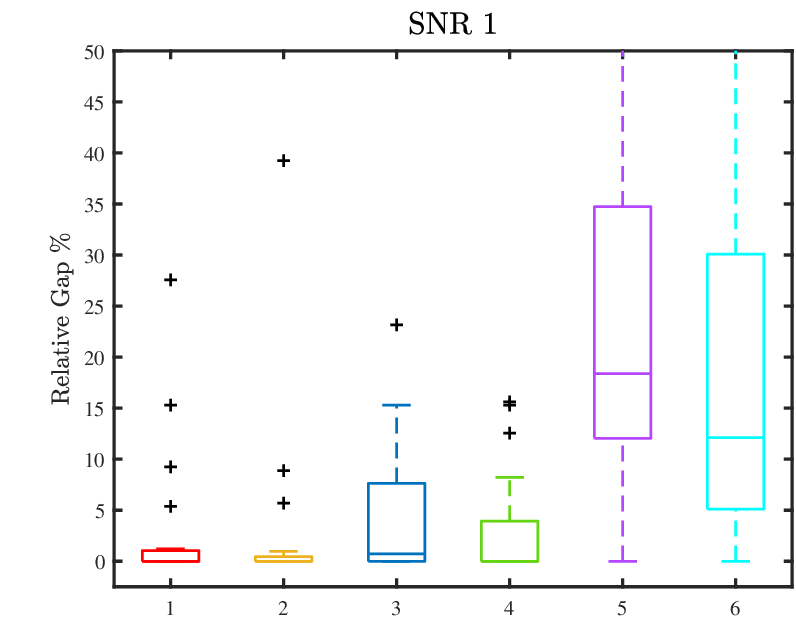}}
    \hfill
    \subfloat{\includegraphics[width=0.32\linewidth]{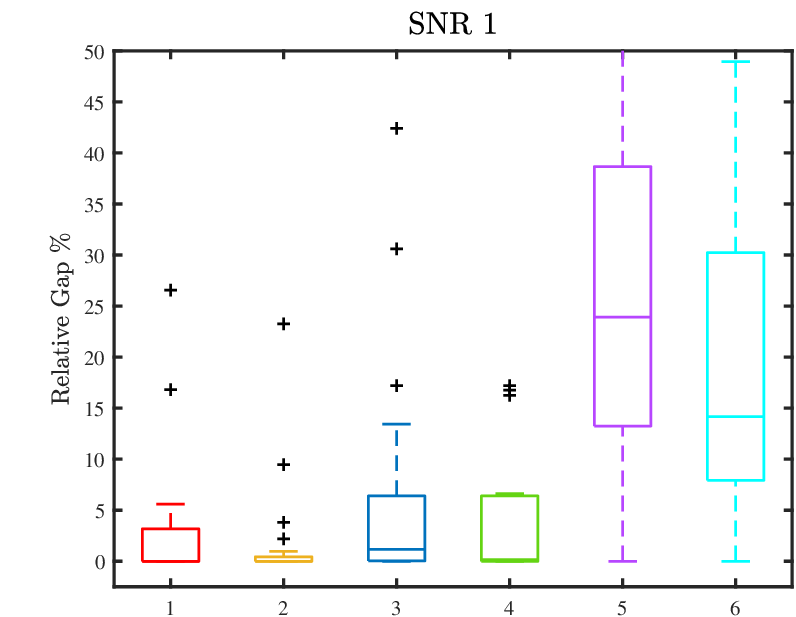}}
    \hfill
    \subfloat{\includegraphics[width=0.32\linewidth]{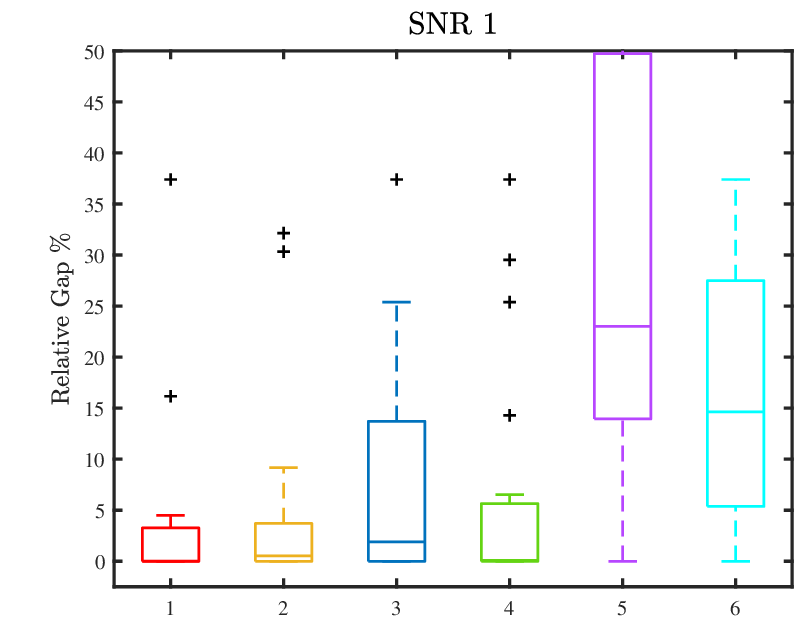}}

    \centering
    \subfloat{\includegraphics[width=0.32\linewidth]{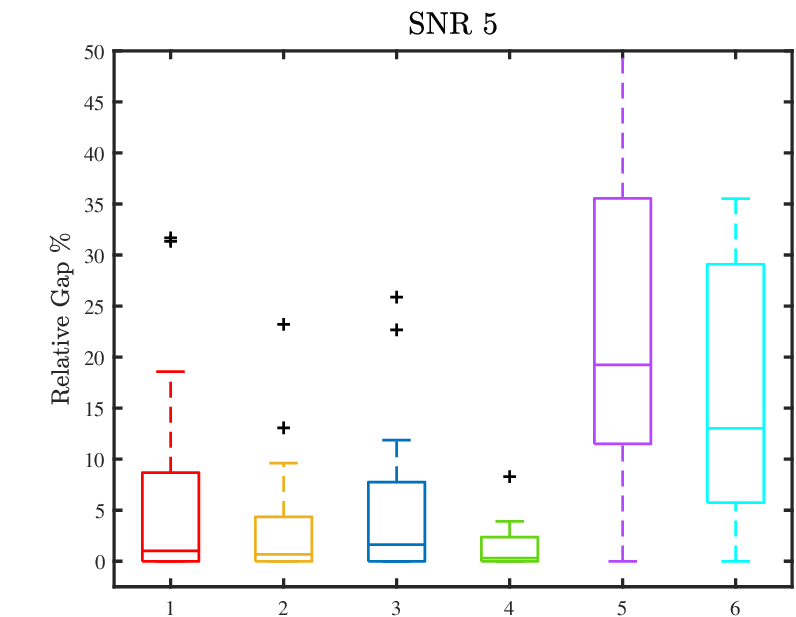}}
    \hfill
    \subfloat{\includegraphics[width=0.32\linewidth]{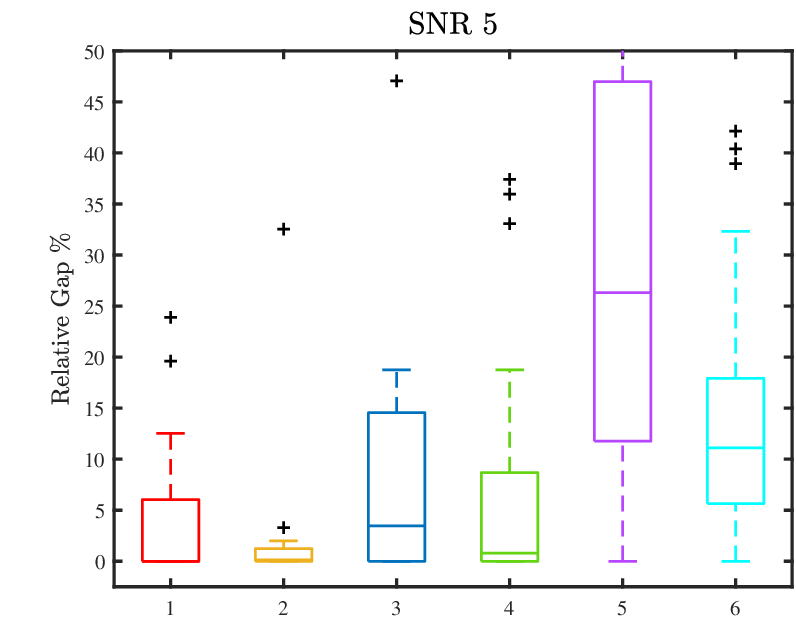}}
    \hfill
    \subfloat{\includegraphics[width=0.32\linewidth]{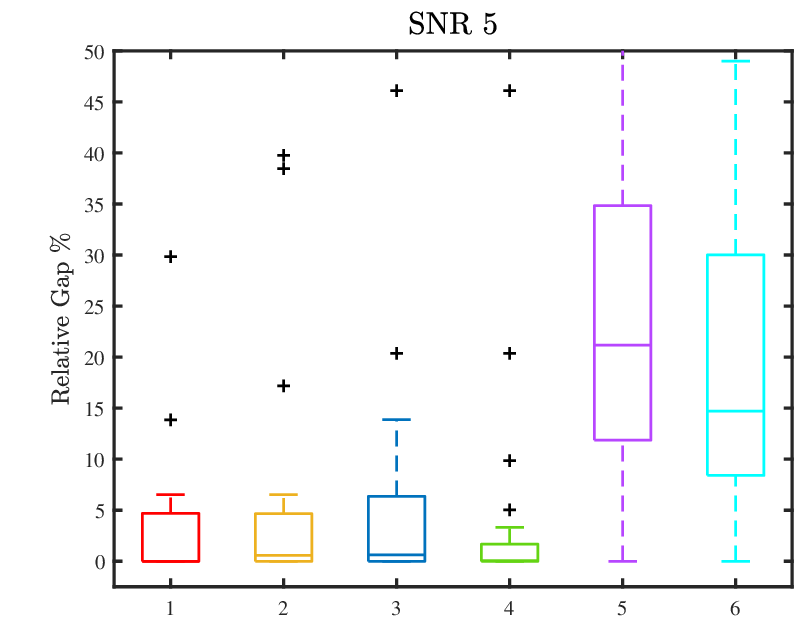}}
    \caption{Box plots of Relative Gap $\%$ for examples 1 (left), 2 (middle), and 3 (right) in UD case with exponential correlation in small, medium, and large dimension regimes with four SNR values and the two $k$ values where (1) SFS1; (2) SFS2; (3) FS; (4) SFFS; (5) GA; (6) DFOn}
    \label{fig:boxplotsUdEcEgsRelGapSfsFsSffsGaDfowrtSNR}
\end{figure}
\paragraph{CPU time performance}
Figures \ref{fig:perprofEgs12ROdEcCpuSfsFsSffsGaDfo} and \ref{fig:perprofEgs12RUdEcCpuSfsFsSffsGaDfo} show performance profiles of CPU time for examples 1, 2, and 3 with exponential correlation in all the three dimension regimes for the OD and the UD case, respectively, for those examples for which all the algorithms terminated without reaching any hard stop. The performance of the algorithms in terms of CPU time mirrors that of constant correlation, but 
FS and SFFS become comparable to SFS1 and even outperform SFS1 in the large dimension OD case. 

Again, our numerical results did not reveal a perfect algorithm. Under these circumstances, SFS1 and SFS2 are considered good alternatives to suboptimal algorithms for solving \eqref{bsschp5}. However, in the OD case, SFS2 became slower with exponential correlation data compared to the data with constant correlation. 
\begin{figure}
    \subfloat{\includegraphics[width=0.32\textwidth]{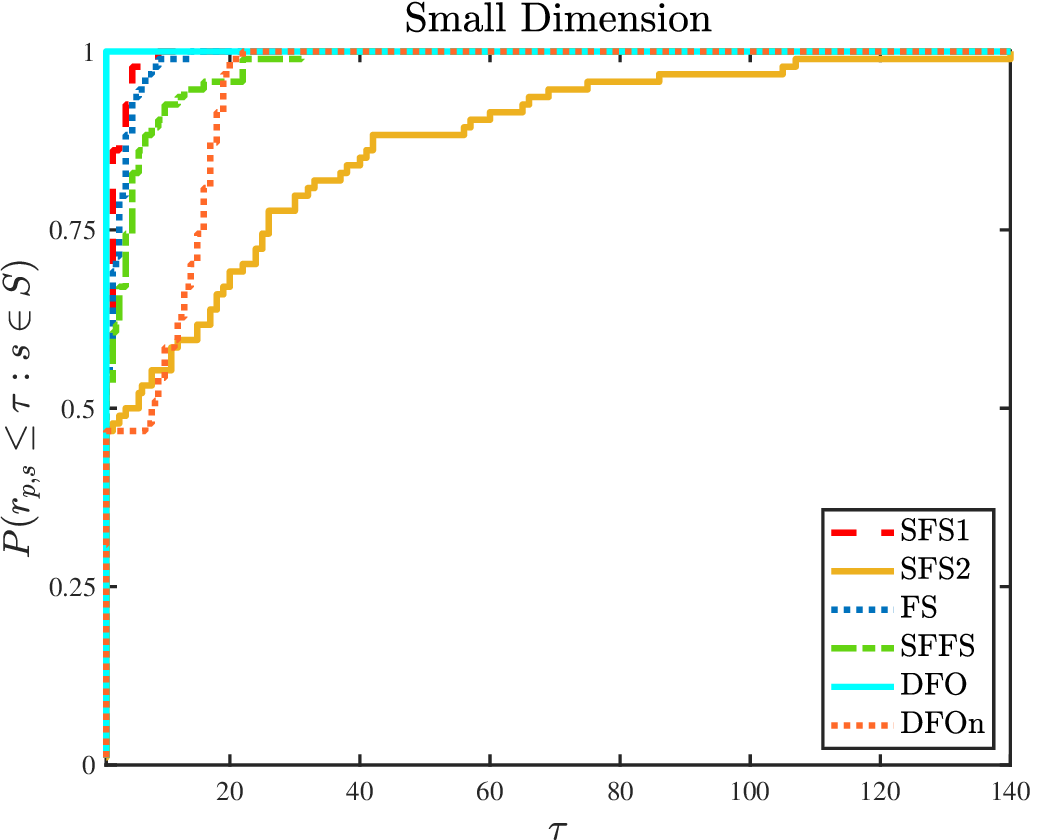}}
    \hfill
   \subfloat{\includegraphics[width=0.32\textwidth]{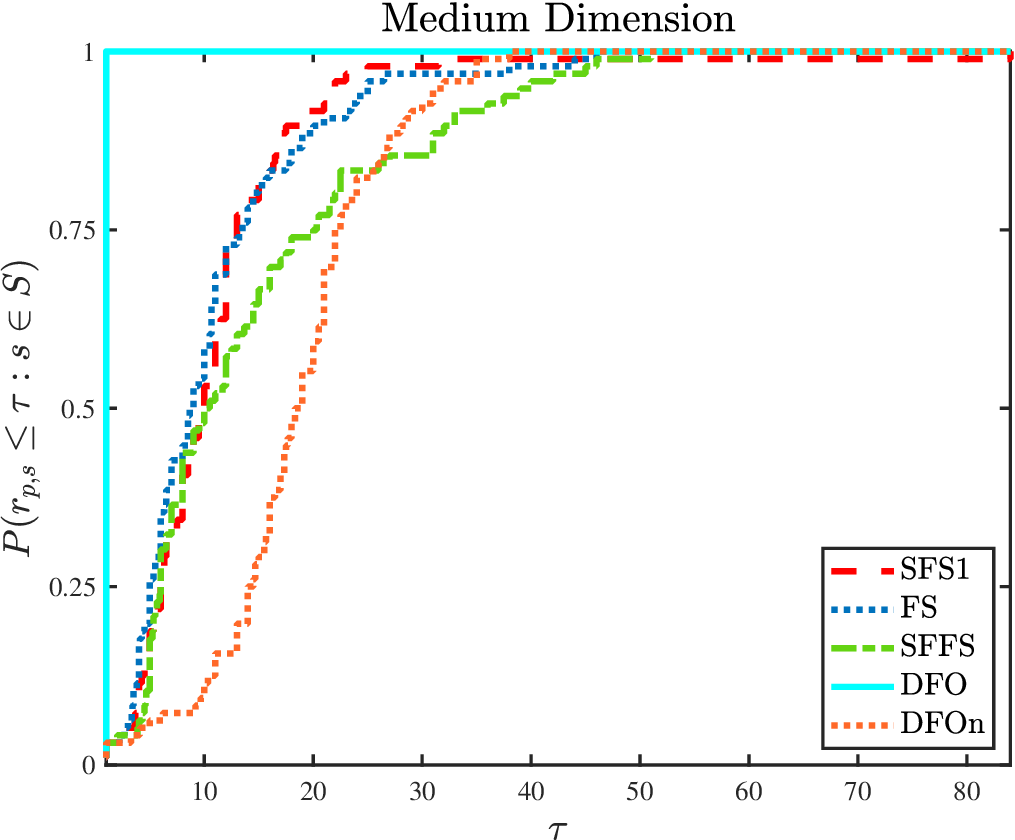}}
    \hfill
    \subfloat{\includegraphics[width=0.32\textwidth]{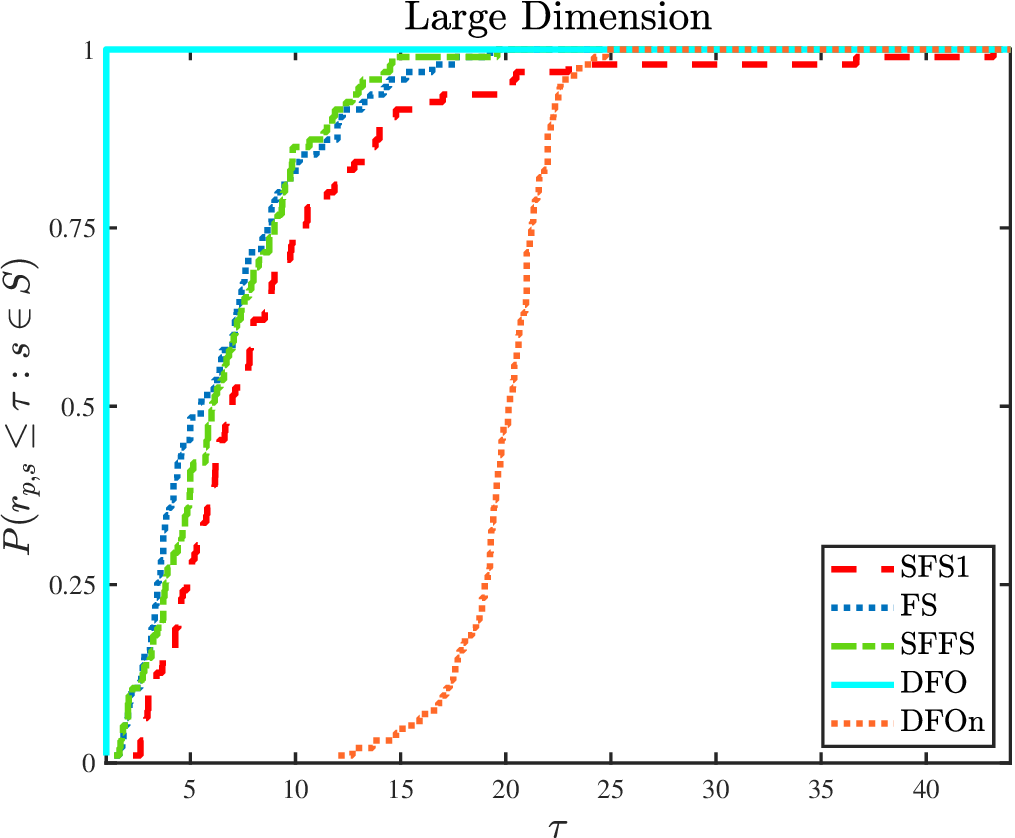}
     }
    \caption{Performance profiles of CPU time for examples 1, 2, and 3 combined in OD case with exponential correlation in small, medium, and large dimension regimes with four SNR values and the two $k$ values}
    \label{fig:perprofEgs12ROdEcCpuSfsFsSffsGaDfo}
\end{figure} 
\begin{figure}
    \subfloat{\includegraphics[width=0.32\textwidth]{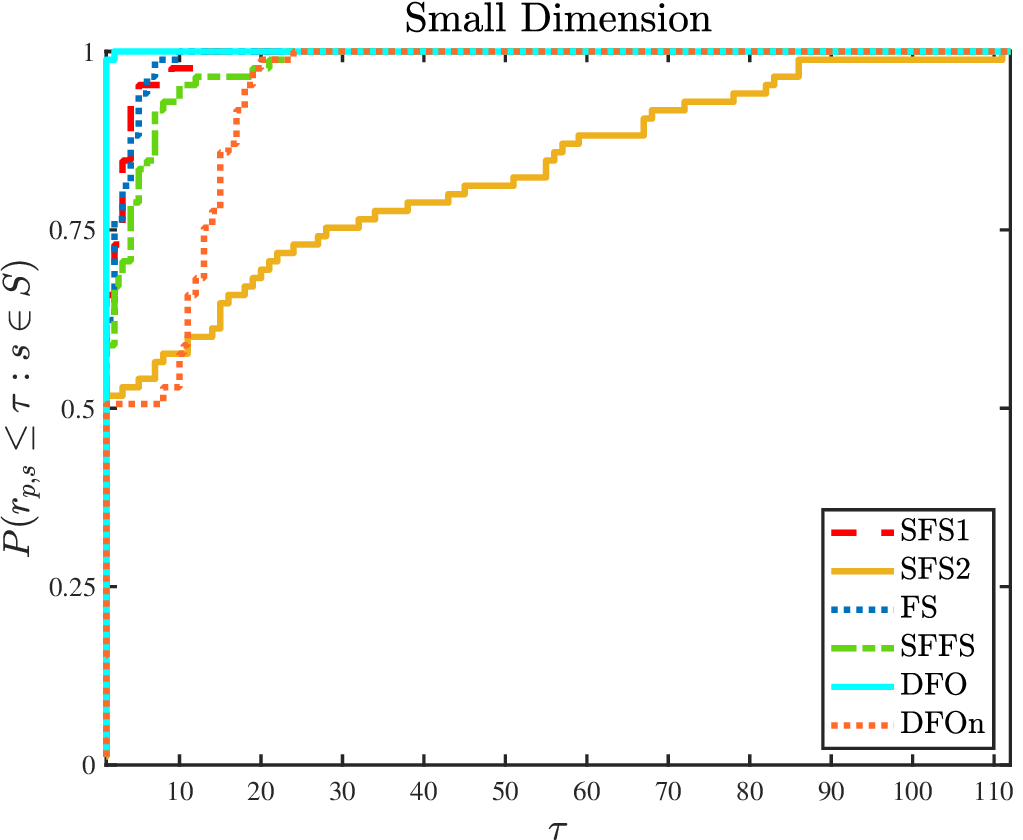}}
    \hfill
    \subfloat{\includegraphics[width=0.32\textwidth]{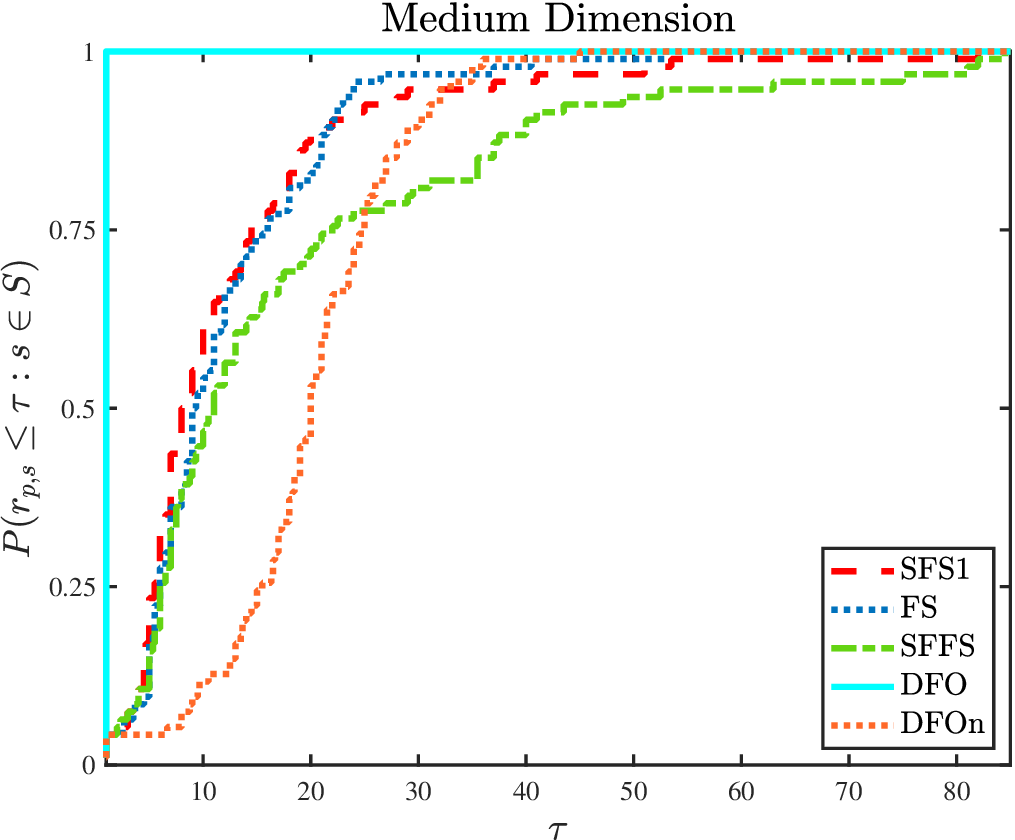}}
    \hfill
    \subfloat{\includegraphics[width=0.32\textwidth]{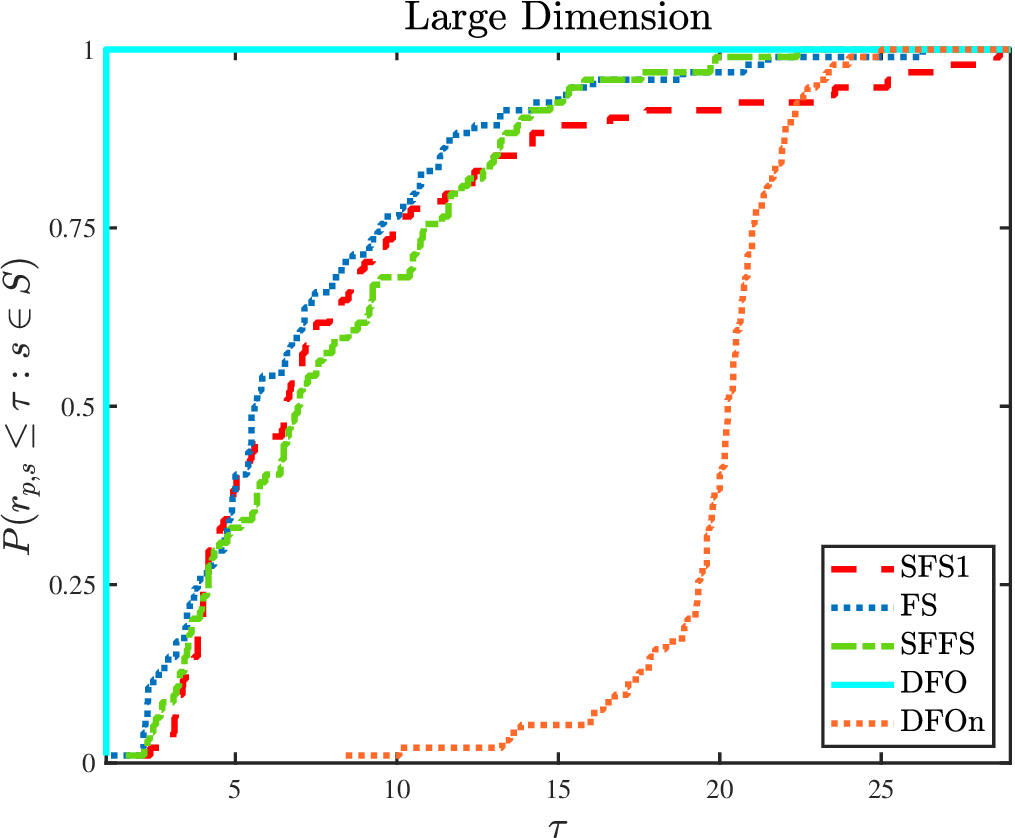}
     }
    \caption{Performance profiles of CPU time for examples 1, 2, and 3 combined in UD case with exponential correlation in small, medium, and large dimension regimes with four SNR values and the two $k$ values}
    \label{fig:perprofEgs12RUdEcCpuSfsFsSffsGaDfo}
\end{figure}
\subsection{Real data sets}
In this section, we present numerical test results for two real data sets. 
\subsubsection{Ozone data set} We use the Los Angeles Ozone data set downloaded from \url{https://hastie.su.domains/ElemStatLearn/}, which consists of 8 meteorological variables measured on 330 days in 1976 in Los Angeles. The response variable is the maximum hourly average ozone concentration. The first 8 variables are: \\
$X_{1}$ : 500 mb height \\
$X_{2}$ : wind speed \\
$X_{3}$ : humidity $\%$ \\
$X_{4}$ : surface temperature \\
$X_{5}$ : inversion height \\
$X_{6}$ : pressure gradient \\
$X_{7}$ : inversion temperature \\
$X_{8}$ : visibility \\
The remaining 36 variables have been generated as $X_{1}^{2}$, $X_{1}X_{2}$, $X_{2}^{2}$, $X_{1}X_{3}$, $X_{2}X_{3}$, $X_{3}^{2}$, $\cdots$, $X_{8}^{2}$. The design matrix $X$ and response $y$ have been normalized to have zero mean and unit $l_{2}$-norm. This results in a model with 44 predictors and 330 responses. This model falls into the middle of our small dimensional overdetermined examples.

Figure \ref{fig:boxPlotRelGapOzoneLeukemiaData} shows box plots of the Relative Gap $\%$ for the Ozone Data Set. Among the six algorithms tested, in terms of solution quality, SFS2 is the winner, followed by SFS1. Table \ref{tab:OzoneData_cputime} shows the CPU times for these algorithms. 
As the value of $k$ increases, the CPU time for SFFS can become significant. This is because SFFS may spend more time correcting the current model while searching for a model with a smaller RSS, at each iteration. In our testing, if SFFS has to stop by reaching the maximum CPU time limit, we use the last model with $k$ predictors found by SFFS as the final output. If SFFS is not able to build a model with $k$ predictors before reaching the maximum CPU limit, then the current model will be returned as the final output. Our test results show that for $k>26$, SFFS always returned a model with less than $k$ predictors in it. In this case, SFFS is clearly at a disadvantage compared to the other algorithms to find a model with the smallest RSS.
\begin{figure}[h!]
    \subfloat{\includegraphics[width=0.48\textwidth]{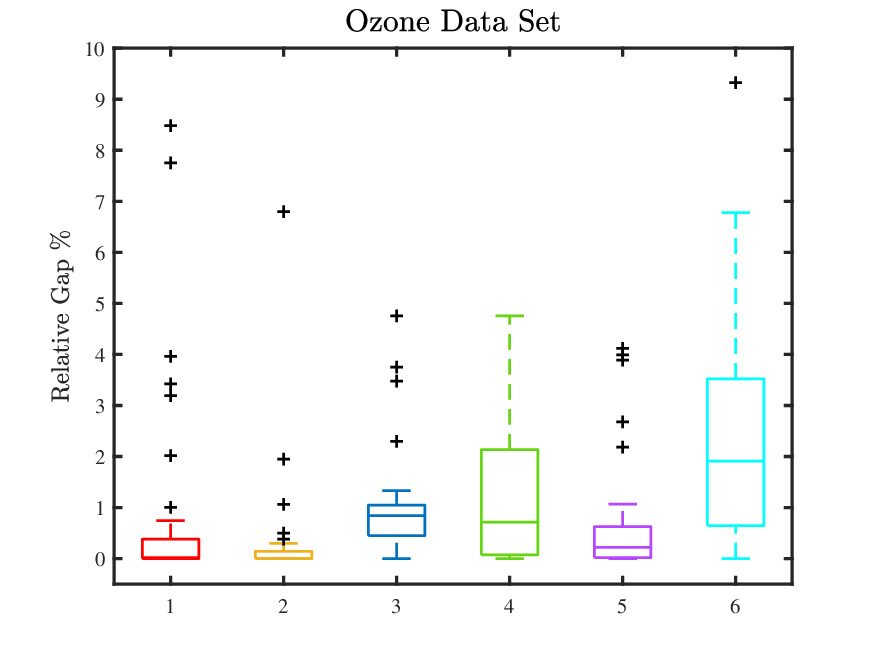}
    }
    \hfill
    \subfloat{\includegraphics[width=0.48\textwidth]{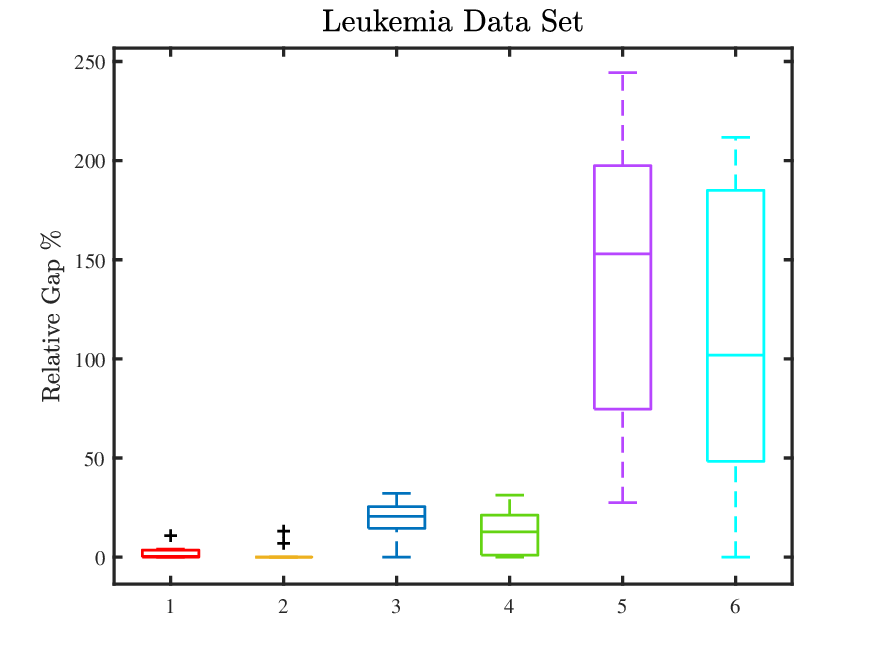}}
    \caption{Box plots of Relative Gap $\%$ for the Ozone Data Set and Leukemia Data Set where (1) SFS1; (2) SFS2; (3) FS; (4) SFFS; (5) GA; (6) DFOn}
    \label{fig:boxPlotRelGapOzoneLeukemiaData}
\end{figure}
\begin{table}[h!]
    \centering
    \caption{CPU time taken in seconds for each $k$ value by an algorithm for Ozone Data Set}
    \label{tab:OzoneData_cputime}
    \begin{tabular}{llllllll}
    \hline\noalign{\smallskip}
     $k$  &  SFS1    &  SFS2    &  FS   &  SFFS &  GA & DFO & DFOn\\
    \noalign{\smallskip}\hline\noalign{\smallskip}
43 & 0.57 & 0.49 & 0.95 & $>$600 & 0.42 & 0.26 & 3.21 \\  
42 & 0.09 & 1.38 & 0.95 & $>$600 & 0.19 & 0.10 & 1.86  \\
41 & 0.16 & 4.55 & 0.24 & $>$600 & 0.21 & 0.10 & 1.58 \\ 
40 & 0.21 & 2.19 & 0.23 & $>$600 & 0.12 & 0.08 & 1.43  \\
39 & 0.21 & 1.80 & 0.25 & $>$600 & 0.21 & 0.08 & 1.43 \\ 
38 & 0.24 & 3.11 & 0.22 & $>$600 & 0.21 & 0.07 & 1.30 \\ 
37 & 0.21 & 2.16 & 0.21 & $>$600 & 0.25 & 0.06 & 1.26 \\ 
36 & 0.50 & 2.50 & 0.38 & $>$600 & 0.09 & 0.06 & 1.06 \\ 
35 & 0.77 & 1.68 & 0.20 & $>$600 & 1.24 & 0.06 & 1.08 \\ 
34 & 0.86 & 2.05 & 0.19 & $>$600 & 0.71 & 0.06 & 1.15 \\ 
33 & 0.34 & 2.99 & 0.19 & $>$600 & 0.55 & 0.15 & 1.65 \\ 
32 & 0.37 & 2.86 & 0.17 & $>$600 & 1.23 & 0.11 & 1.16 \\ 
31 & 0.30 & 2.10 & 0.17 & $>$600 & 1.38 & 0.04 & 0.81 \\ 
30 & 0.37 & 1.64 & 0.16 & $>$600 & 0.75 & 0.04 & 0.80 \\ 
29 & 0.36 & 1.46 & 0.16 & $>$600 & 0.57 & 0.03 & 0.63  \\
28 & 0.46 & 2.87 & 0.15 & $>$600 & 0.91 & 0.04 & 0.65 \\ 
27 & 0.22 & 1.38 & 0.15 & $>$600 & 0.86 & 0.03 & 0.64  \\
26 & 0.26 & 1.49 & 0.13 & $>$600 & 5.64 & 0.02 & 0.51 \\ 
25 & 0.23 & 1.18 & 0.13 & 0.50 & 4.78 & 0.02 & 0.46 \\ 
24 & 0.19 & 1.10 & 0.11 & 0.48 & 0.98 & 0.06 & 0.57  \\
23 & 0.24 & 1.15 & 0.11 & 0.47 & 2.95 & 0.06 & 0.61 \\ 
22 & 0.17 & 0.94 & 0.11 & 0.43 & 0.62 & 0.02 & 0.35 \\ 
21 & 0.41 & 1.30 & 0.36 & 1.16 & 0.66 & 0.02 & 0.52 \\ 
20 & 0.26 & 0.77 & 0.46 & 1.12 & 0.38 & 0.05 & 0.46 \\ 
19 & 0.13 & 0.62 & 0.39 & 0.40 & 0.69 & 0.01 & 0.41 \\ 
18 & 0.08 & 0.56 & 0.16 & 0.35 & 0.34 & 0.02 & 0.24 \\ 
17 & 0.07 & 0.63 & 0.08 & 0.28 & 0.31 & 0.01 & 0.22 \\ 
16 & 0.07 & 0.40 & 0.06 & 0.27 & 1.05 & 0.01 & 0.21 \\ 
15 & 0.07 & 0.39 & 0.06 & 0.12 & 0.62 & 0.02 & 0.22 \\ 
14 & 0.05 & 0.37 & 0.05 & 0.11 & 0.15 & 0.01 & 0.25 \\ 
13 & 0.05 & 0.63 & 0.05 & 0.11 & 0.22 & 0.01 & 0.20 \\ 
12 & 0.05 & 0.55 & 0.12 & 0.10 & 0.59 & 0.01 & 0.22  \\
11 & 0.07 & 0.51 & 0.04 & 0.09 & 0.29 & 0.00 & 0.14 \\ 
10 & 0.07 & 0.35 & 0.03 & 0.09 & 93.08 & 0.00 & 0.13 \\ 
9 & 0.05 & 0.33 & 0.03 & 0.07 & 0.55 & 0.01 & 0.13 \\ 
8 & 0.03 & 0.20 & 0.03 & 0.06 & 0.57 & 0.01 & 0.10  \\
7 & 0.04 & 0.32 & 0.03 & 0.05 & 0.03 & 0.01 & 0.13 \\ 
6 & 0.02 & 0.36 & 0.02 & 0.05 & 0.26 & 0.01 & 0.09 \\ 
5 & 0.02 & 0.29 & 0.02 & 0.02 & 0.04 & 0.01 & 0.09 \\ 
4 & 0.01 & 0.19 & 0.01 & 0.02 & 89.69 & 0.01 & 0.10 \\ 
3 & 0.02 & 0.68 & 0.01 & 0.00 & 0.01 & 0.00 & 0.08  \\
2 & 0.04 & 0.04 & 0.00 & 0.00 & 0.01 & 0.01 & 0.06 \\ 
1 & 0.05 & 0.04 & 0.01 & 0.01 & 0.02 & 0.01 & 0.12 \\ 
    \noalign{\smallskip}\hline
    \end{tabular}
\end{table}

\subsubsection{Leukemia data set} We considered the Leukemia Data Set downloaded from the supplementary material accompanying \cite{friedman2010regularization}. This data set consists of 3571 predictors and 72 observations with a binary response $y$, classifying a given sample into \textit{ALL} or \textit{AML} type leukemia. This model falls out of the dimensions we considered for the synthetic data sets in Table \ref{tab:sma_med_lar_eg_setup}. Thus, it is considered as an extreme case of large dimensional underdetermined examples. We changed the maximum CPU time limit to 1200 seconds for this testing.  

Figure \ref{fig:boxPlotRelGapOzoneLeukemiaData} and Table \ref{tab:LeukemiaData_cputime} show box plots of the Relative Gap $\%$ and list of CPU times for the Leukemia Data Set, respectively. SFS2 selects the models with the smallest RSS, followed by SFS1. In terms of CPU time, SFS1 is comparable to FS and SFFS and is significantly better than SFS2. In this testing, SFS1 emerges as the overall winner, providing good quality solutions within a reasonable CPU time.
\begin{table}[h!]
    \centering
    \caption{CPU time taken in seconds for each $k$ value by an algorithm for Leukemia Data Set}
    \label{tab:LeukemiaData_cputime}
    \begin{tabular}{llllllll}
    \hline\noalign{\smallskip}
     $k$  &  SFS1    &  SFS2    &  FS   &  SFFS &  GA & DFO & DFOn\\
    \noalign{\smallskip}\hline\noalign{\smallskip}
    10 & 6.85 & $>$1200 & 9.37 & 10.42 & $>$1200 & 1.00 & 19.60\\
    9 & 3.97 & $>$1200 & 4.23 & 5.18 & $>$1200 & 1.05 & 22.19\\
    8 & 3.13 & $>$1200 & 3.41 & 4.45 & $>$1200 & 0.97 & 21.18\\
    7 & 2.48 & $>$1200 & 2.90 & 2.90 & 228.02 & 0.92 & 18.60\\
    6 & 2.39 & $>$1200 & 2.40 & 2.42 & 225.36 & 0.94 & 18.30\\
    5 & 1.80 & $>$1200 & 1.92 & 1.91 & 225.85 & 0.92 & 18.18\\
    4 & 1.29 & 908.98 & 1.46 & 1.46 & 231.02 & 0.96 & 18.13\\
    3 & 0.81 & 858.31 & 1.02 & 1.03 & 231.92 & 0.88 & 18.13\\
    2 & 1.55 & 1.02 & 0.62 & 0.62 & 235.32 & 0.93 & 18.31\\
    1 & 1.99 & 1.64 & 0.28 & 0.28 & 230.86 & 0.95 & 18.73\\
    \noalign{\smallskip}\hline
    \end{tabular}
\end{table}
  
 \section{Discussion}
 In this work, we compared five suboptimal algorithms to solve the \eqref{bsschp5} problem. Two of them have two versions. The SFS and SFFS algorithms adapted to solve \eqref{bsschp5} show an advantage over the popular FS, GA, and DFO algorithms. SFS2 requires switching two predictors at each iteration, which increases the chance of finding a better model. However, finding the RSS for all the required new combinations of $k$ features at each iteration leads to a high computational cost, even for a moderate dimension problem. SFFS works by incorporating a strategy to correct a previously selected model with less than $k$ features while building the model with $k$ features. Due to this check, SFFS can take a lot of CPU time for a given example. SFS1 shows the desirable characteristics of finding good quality solutions for the \eqref{bsschp5} problem with less CPU time than SFS2 and SFFS algorithms. Thus, SFS1 is considered to be a competitive suboptimal algorithm for solving the best subset selection problem.
 \subsection*{Reproducibility}
 The test results in this paper can be reproduced by downloading the corresponding files from \hyperlink{https://github.com/vikrasingh/bss-suboptimal}{https://github.com/vikrasingh/bss-suboptimal}.
 
\bibliographystyle{plain} 
\bibliography{main} 

\end{document}